\pdfoutput=1

\documentclass[11pt]{article}

\usepackage[]{acl}

\usepackage{booktabs} 
\usepackage{tikz} 

\usepackage{mathtools} 

\usepackage{amssymb}


\usepackage{amsmath,amsfonts,bm}









\def\eqref#1{equation~\ref{#1}}









\def\1{\bm{1}}










\DeclareMathAlphabet{\mathsfit}{\encodingdefault}{\sfdefault}{m}{sl}
\SetMathAlphabet{\mathsfit}{bold}{\encodingdefault}{\sfdefault}{bx}{n}











\newcommand{\KL}{D_{\mathrm{KL}}}



\DeclareMathOperator*{\argmin}{arg\,min}

\usepackage{hyperref}
\usepackage{url}
\usepackage{subcaption}
\usepackage{booktabs} 
\usepackage{graphicx}
\usepackage{amsmath}
\usepackage{amsfonts}
\usepackage{multirow}
\usepackage{wrapfig}
\usepackage{comment}

\usepackage{amssymb,amsthm,mathtools, bbm}
\newtheorem{theorem}{Theorem}[section]
\newtheorem{proposition}[theorem]{Proposition}
\newtheorem{lemma}[theorem]{Lemma}

\newtheorem{definition}[theorem]{Definition}
\usepackage[symbol]{footmisc}

\usepackage{fdsymbol}

\usepackage{lipsum}
\usepackage{cuted}

\usepackage{times}
\usepackage{latexsym}

\usepackage{algpseudocode,algorithm,algorithmicx}

\newcommand*\Let[2]{\State #1 $\gets$ #2}
\algrenewcommand\algorithmicrequire{\textbf{Initialize:}}
%

\usepackage{tikz}
\usetikzlibrary{decorations.pathreplacing,calc}
\newcommand{\tikzmark}[1]{\tikz[overlay,remember picture] \node (#1) {};}

\newcommand*{\AddNote}[4]{%
    \begin{tikzpicture}[overlay, remember picture]
        \draw [decoration={brace,amplitude=0.5em},decorate,ultra thick,red]
            ($(#3)!(#1.north)!($(#3)-(0,1)$)$) --  
            ($(#3)!(#2.south)!($(#3)-(0,1)$)$)
                node [align=center, text width=2.5cm, pos=0.5, anchor=west] {#4};
    \end{tikzpicture}
}%

\usepackage{amsopn}

\usepackage[T1]{fontenc}
\usepackage{soul,color}
\usepackage{cleveref}

\usepackage[utf8]{inputenc}

\usepackage{microtype}

\DeclareMathOperator*{\OT}{T}
\DeclareMathOperator*{\LSE}{LSE}
\DeclareMathOperator*{\defn}{\overset{def.}{=}}

\DeclareMathOperator*{\Cmat}{C}

\usepackage{mathtools}

\newcommand{\declarelogo}[0]{\includegraphics[height=.02\textwidth]{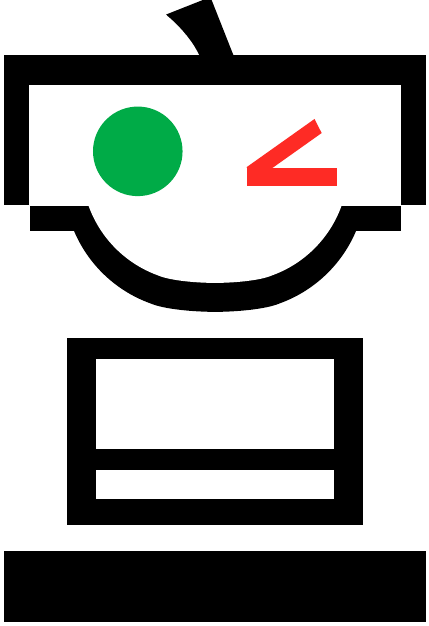}}
\newcommand{\ntulogo}[0]{\includegraphics[height=.02\textwidth]{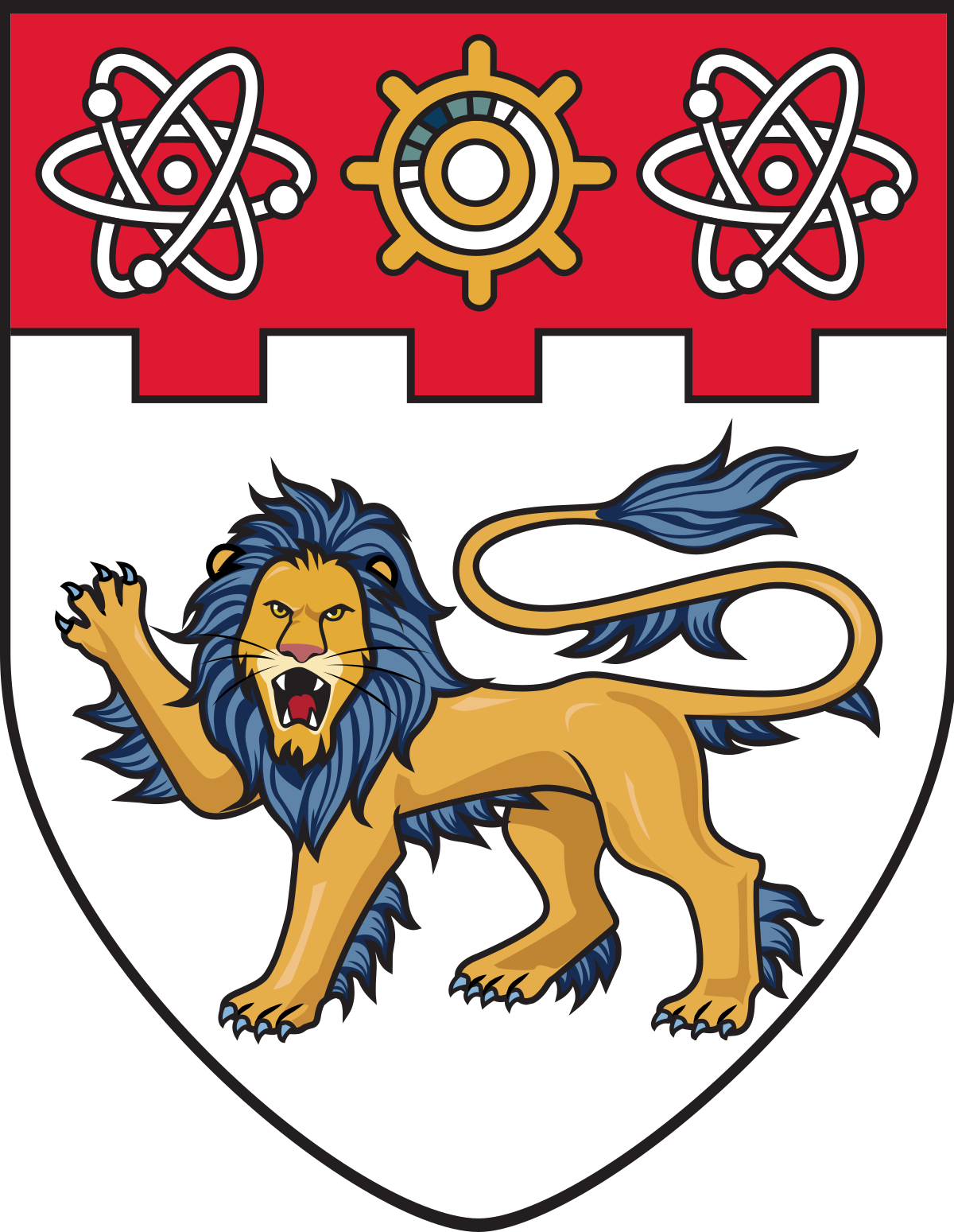}}

%
%


\title{KNOT: Knowledge Distillation using Optimal Transport\\ for Solving NLP Tasks}

\author{Rishabh Bhardwaj$^{\declarelogo}$ \hspace{2mm} 
Tushar Vaidya$^{\ntulogo}$ 
\hspace{2mm}Soujanya Poria$^{\declarelogo}$ \\
  $^{\declarelogo}$ Singapore University of Technology and Design, Singapore\\
  $^{\ntulogo}$ Nanyang Technological University, Singapore\\
\texttt{rishabh\_bhardwaj@mymail.sutd.edu.sg, tushar.vaidya@ntu.edu.sg}\\
 \texttt{sporia@sutd.edu.sg}
}

\begin{document}
\maketitle
\begin{abstract}
We propose a new approach, Knowledge Distillation using Optimal Transport (KNOT), to distill the natural language semantic knowledge from multiple teacher networks to a student network. KNOT aims to train a (global) student model by learning to minimize the optimal transport cost of its assigned probability distribution over the labels to the weighted sum of probabilities predicted by the (local) teacher models, under the constraints that the student model does not have access to teacher models' parameters or training data. To evaluate the quality of knowledge transfer, we introduce a new metric, Semantic Distance (SD), that measures semantic closeness between the predicted and ground truth label distributions. The proposed method shows improvements in the global model's SD performance over the baseline across three NLP tasks while performing on par with Entropy-based distillation on standard accuracy and F1 metrics. The implementation pertaining to this work is publicly available at \url{https://github.com/declare-lab/KNOT}.
\end{abstract}

\section{Introduction}
Due to recent technological advancements, more than two-thirds of the world's population use mobile phones\footnote[1]{\url{https://datareportal.com/global-digital-overview}}. A client application on these devices has access to the unprecedented amount of data obtained from user-device interactions, sensors, etc. Learning algorithms can employ this data to provide an enhanced experience to its users. For instance, two users living wide apart may have different tastes in food. A food recommender application installed on an edge device might want to learn from user feedback (reviews) to satisfy the client's needs pertaining to distinct domains. However, directly retrieving this data comes at the cost of losing user privacy \cite{jeong2018communication}.

\begin{figure}
\begin{center}
\includegraphics[width=0.3\textwidth]{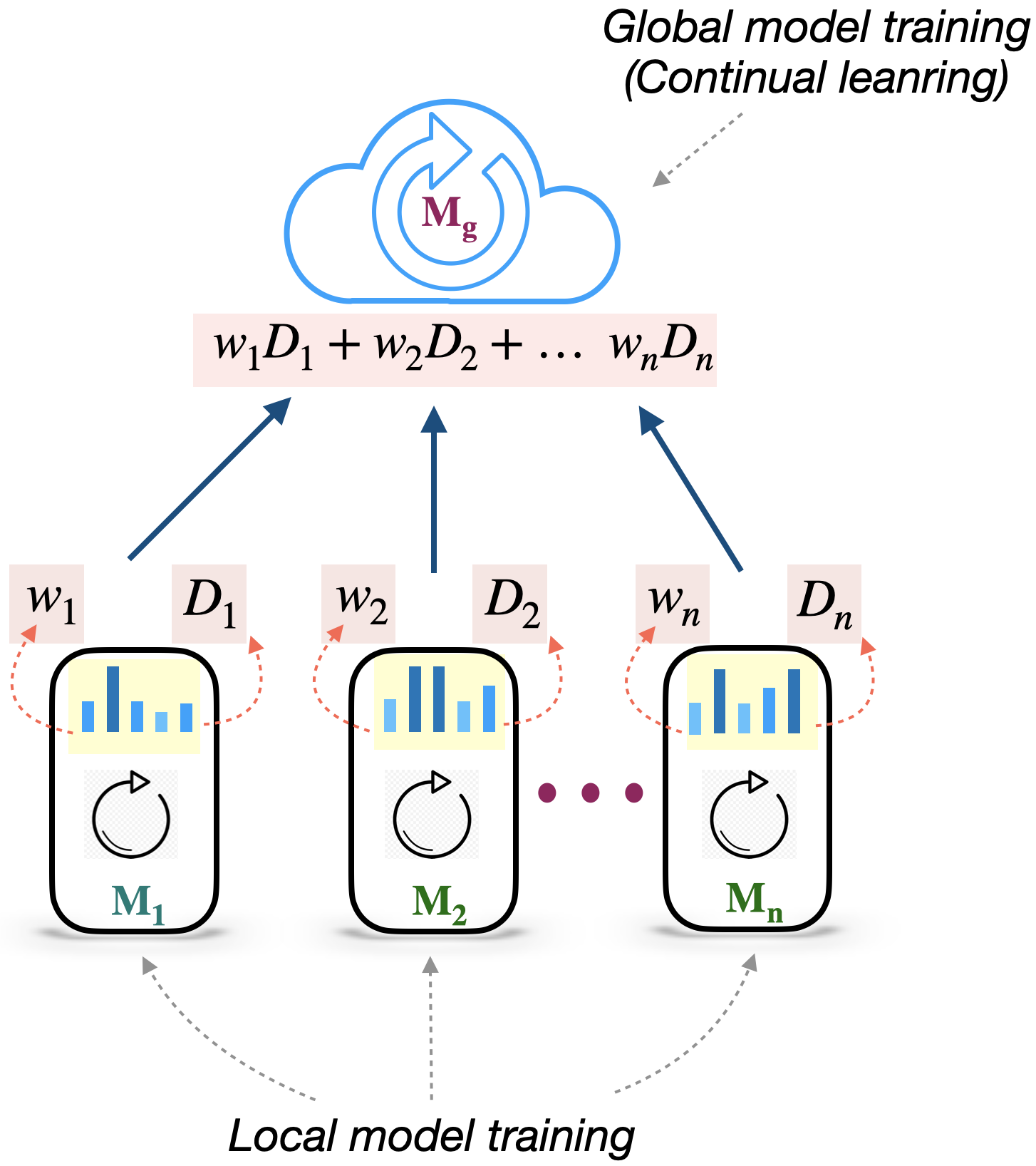}
\end{center}
\caption{KNOT framework: Local models acting as multiple teachers trained locally on user data while the global model acts as a student. The global model can only observe predictions of local models.} \label{fig:modelsetup}
\end{figure}

To minimize the risk of leaking user information, Federated Learning (FL) is a class of algorithms that proposes an alternative learning mechanism \cite{konevcny2016federated, mcmahan2017communication}. The parameters from the teacher networks, i.e., user (domain)-specific \textbf{local models} are retrieved to train a student network, i.e., a user (domain)-generic \textbf{global model}. The classic FL algorithms such as federated averaging and its successors are based on averaging of local model parameters or local gradient updates, and thus only applied when the global and local models possess similar network architectures. Additionally, FL has critical limitations of being costly in terms of communication load with the increase in local model sizes \cite{mohri2019agnostic, li2019fair, jeong2018communication, lin2020ensemble}. Another set of algorithms, Federated Distillation (FD), propose to exchange only the outputs of the local model, i.e, either logits or probability measures whose dimensions are usually much smaller than the parameter size of models themselves \cite{jeong2018communication}. Thus, it enables learning from an ensemble of teacher local models of dissimilar architecture types. In contrast to FL, FD trains the global student model at reduced risk of user privacy, lower communication overhead, and lesser memory space utilization. In this work, we base our problem formulation under FD setting where global model can access the local model outputs without visibility over local user data or model parameters (\Cref{fig:modelsetup}).

Since the Kullback–Leibler (KL) divergence is easy to compute, facilitates smooth backpropagation, and is widely used \cite{murphy2012machine}, it became standard practice to use it to define the objective function in most FD algorithms \cite{gou2021knowledge, lin2020ensemble, jeong2018communication}. However, a critical limitation of such entropy-based losses is that they ignore any metric structure in the label space. For instance, in the task of fine-grained sentiment classification of text, strongly positive sentiment is closer to positive while far from strongly negative sentiment. This information is not fully utilized by the existing distillation algorithms.

Contrary to entropy losses, Optimal Transport (OT) loss admits such inter-class relationships as demonstrated by \citet{frogner2015learning}. Thus, we propose an OT-based approach, KNOT (\textbf{KN}owledge distillation using \textbf{O}ptimal \textbf{T}ransport), for knowledge distillation of natural language semantics encoded in local models under FD setting. To improve the semantic knowledge transfer from local models to the global model acting as multiple teachers-single student framework, we explicitly encode the inter-label relationship in distillation loss in terms of the cost of probability mass transport. Thus, the major contributions of this work are:

\textbf{Contribution:1 (C1)} For the tasks with intrinsic inter-class semantics, we propose a novel optimal transport-based knowledge distillation approach KNOT for distillation from an ensemble of multiple teacher networks under FD setting.

\textit{The problem of bias distillation}: Local models are prone to possess biases that might get transferred to the global model during distillation. The bias in a local model can potentially arise from the user-specific local data that is potentially non-independent and non-identically distributed (non-IID). One such bias is population bias, i.e., the local user may not represent the target overall population \cite{mehrabi2019survey}. This motivates our second contribution:

\textbf{Contribution:2 (C2)} To reduce the potential risk of bias transfer from local models to global, KNOT employs a weighted distillation scheme. The local model with a higher L\textsuperscript{2} distance of predicted probabilities from its intrinsic bias (a distribution) is given higher importance during distillation.

It is important to note that the application of C2 in KNOT does not warrant the C1 setting to be satisfied and vice versa. As shown in \Cref{fig:modelsetup}, the global model $M_g$ aims to learns from the weighted sum of distance $D_1, \ldots, D_n$ with weights being $w_1, \ldots, w_n$. Where $D_k$'s represents the divergence of the global model's prediction from $k\textsuperscript{th}$ local model's prediction. This constitutes our C1. The $w_k$'s are obtained via C2. To validate the fitness of C1 and C2, we also derive generalization bounds of the proposed distillation mechanism.

Next, we introduce a metric called Semantic Distance (SD). It helps us evaluate the semantic closeness of the model's output from the ground truth distribution.

\subsection*{Semantic Distance (SD)}
Most performance metrics, such as accuracy and F1, observe the label with the highest logit (or probability) against ground truth, hence, ignore the overall probability distribution over labels. However, for tasks with inter-class relationships, the predicted distribution shape can be of great importance. Therefore, we define a new performance metric---Semantic Distance (SD)---that measures the semantic closeness of the output distribution against the ground truth. Given a label coordinate space, SD is defined as the mean Euclidean distance of the expected value of output from the ground truth label. For instance, given the sentiment labels \{1, 2, 3, 4, 5\}, the output probabilities of two models m1 and m2 to a strongly negative text input be \{0.2, 0.7, 0.033, 0.033, 0.033\} and \{0.4, 0.1, 0.1, 0.1, 0.3\}, respectively. The argmax of m2 is correct. However, even when the argmax of m1 is incorrect, the expected value of m1, i.e., 1.97 is closer to the ground truth label 1 than m2, i.e., 2.80, and thus semantically more accurate\footnote[2]{Expect value of m1${=} \{1\times0.2+2\times0.7+3\times0.033+4\times0.033+5\times0.033\}$}. A low score denotes a more semantically accurate prediction. The lowest possible value of SD is 0 while the highest possible value depends on the number of labels and their map in the semantic space. For datasets with class imbalance, we first calculate label-wise SD score values and compute their mean to report the SD score on the task.

\begin{figure}
\begin{center}
\includegraphics[width=0.5\textwidth]{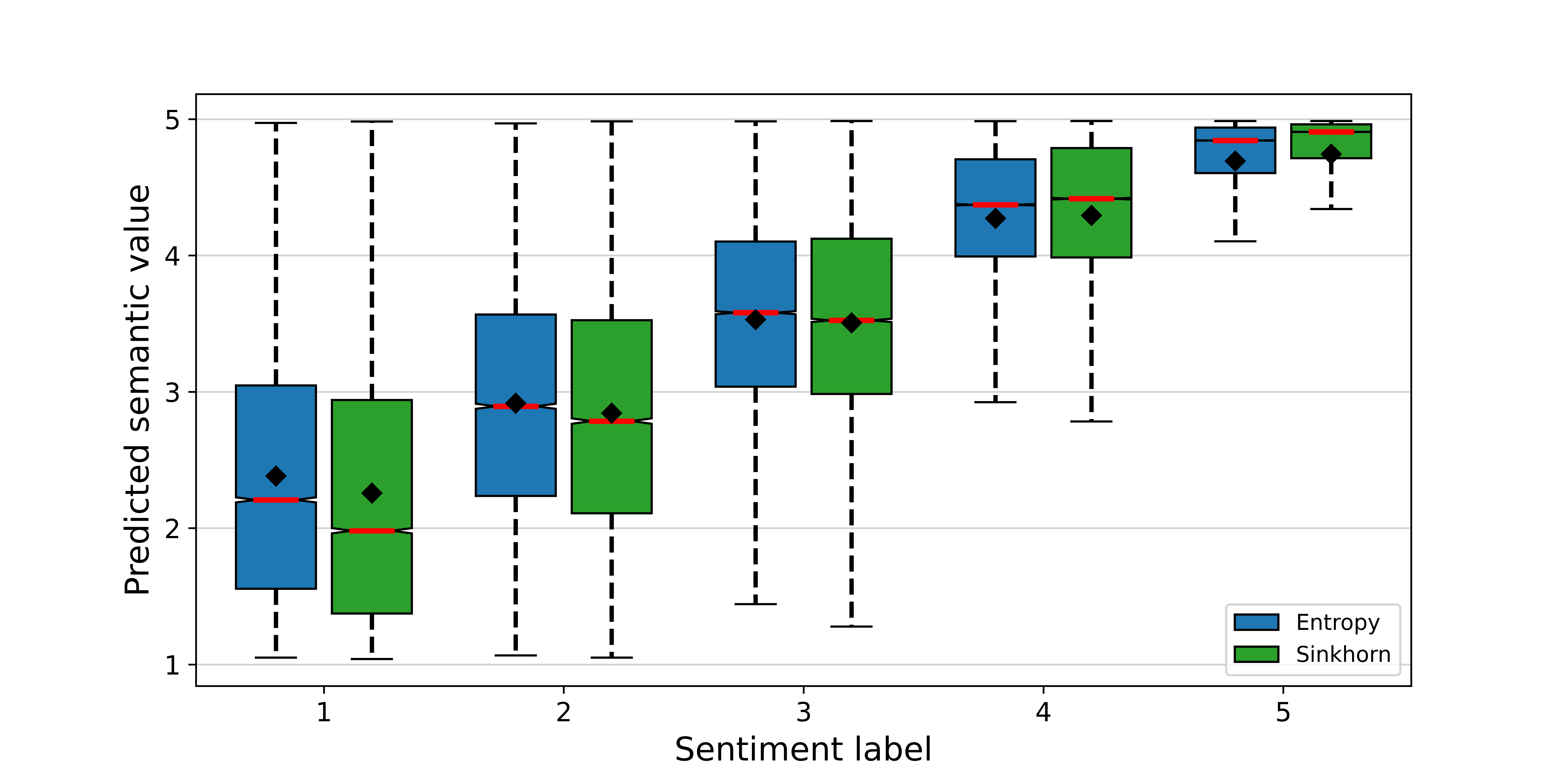}
\end{center}
\caption{Box plot showing expectation of output probabilities for SA task. Horizontal and vertical axes denote ground truth sentiment labels. Sinkhorn denotes loss based on optimal transport.} \label{fig:box_plot}
\end{figure}

\paragraph{Motivation behind SD.} We study the usefulness of the SD metric, for the sentiment analysis, we draw box plots of pretrained global models via Entropy (KL-divergence) and OT-based loss (Sinkhorn). As shown in \Cref{fig:box_plot}, we observe the median SD of OT (green box, red line) is closer to the ground truth sentiment classes---1,2,3, and 5 as compared to the median SD of Entropy (blue box-red line). Similarly, the means (black diamond), the first quartile (25\% of samples), and the third quartile (75\% of samples) for the OT-based model are closer to the ground truth \footnote[3]{The training setup is described later in \Cref{sec:expe_setup}.}.

As a critical finding of this work, OT outperforms baselines on the SD metric. The experiments are carried out on the three natural language understanding (NLU) tasks, i.e., fine-grained sentiment analysis, emotion recognition in conversation, and natural language inference. OT achieves on par with the baselines on the standard performance metrics such as accuracy and Macro F1.

\section{Related work}
There have been many approaches to FL such as local model parameter averaging based on local SGD updates \cite{mcmahan2017communication, lin2018don}. It warrants global and local models to have the same model architecture. Another line of work is multiple-source adaptation formulations where a learner has access to source domain-specific predictor without access to the labeled data. The expected loss is the mixture of source domains \cite{hoffman2018algorithms}. Even though the formulation is close, our solution to the problem is different as we do not have access to the local or global data domain distribution. In Natural Language Processing (NLP), \citet{hilmkil2021scaling, lin2020ensemble, liu2020federated} fine-tune Transformer-based architecture in the federated setting. However, they do not leverage label space semantics and the analysis is restricted to small-scale datasets.

Closest to our work aims to improve local client training based on local data heterogeneity \cite{li2018federated, nedic2020distributed}. Knowledge distillation aims to transfer knowledge from a (large) teacher model to a (smaller) student model \cite{hinton2015distilling, bucil2006model}. Given the output logit/softmaxed values of the teacher model, the student can imitate the teacher's behavior \cite{romero2014fitnets, tian2019contrastive}. A few works are dedicated to the distillation of the ensemble of teacher models to the student model. This includes logit averaging of teacher models \cite{you2017learning, furlanello2018born} or feature level knowledge extraction \cite{park2019feed, liu2019knowledge}.

To the best of our knowledge, there is no prior work that aims to leverage OT to enhance the distillation of semantic knowledge in local models under the FD paradigm. We use standard and widely used entropy-based loss (KL-divergence) as our baseline to compare with C1. We also construct two baselines for confidence score calculation from the prior works, i.e., logit averaging and weighting scheme based on local model dataset size \cite{mcmahan2017communication}. This is to compare the contribution of C2.

\section{Optimal Transport} \label{OT}
Traditional divergences, such as KL, ignore metric structure in the label space $\mathcal{Y}$. In other words, they do not allow the incorporation of inter-class relationships in the loss function. Contrary to this, optimal transport metrics can be extremely useful in defining inter-class semantic relationships in the label space. \footnote[4]{OT offers an additional advantage when measures have non-overlapping support \cite{peyre2019computational}.} The proposed approach, KNOT, utilizes OT in semantic FD settings for tasks, such as sentiment analysis, where inter-class relations can be encoded in the label space. Specific to studied classification problems, we focus on discrete probability distributions. Assume the label space $\mathcal{Y}$ possess a metric $d_{\mathcal{Y}}(\cdot,\cdot)$ that establishes the semantic similarity between labels. The original OT problem is defined as a linear program \cite{bogachev2012monge}. Let $\mu_i$ and $\nu_j$ be the probability masses respectively applied to label $i \in \mathcal{Y}_s$ and label $j \in \mathcal{Y}_t$. Let $\pi_{i,j}$ be the transport assignment from the label $i$ to $j$ that costs $C_{(i,j)}$, i.e., an element of the cost matrix $\Cmat$. We denote Frobenius inner product by $\langle \cdot, \cdot \rangle$. Rather than work with pure OT (Wasserstein) distances, we will restrict our attention to plain regularized OT, i.e, vanilla Sinkhorn distances. The primal goal is to find the plan $\pi \in \Pi(\mu,\nu)$ that minimizes the transport cost

\begin{definition}{Vanilla Sinkhorn Distance}
{\small
\begin{align} \label{eq:plain}
    \begin{split}
    {\OT}_\varepsilon(\mu, \nu) &\defn \min_{\pi \in \Pi(\mu, \nu)} \langle \pi, {\Cmat}\rangle + \varepsilon \KL(\pi,\mu \otimes \nu)\\
    \Pi(\mu, \nu)&=\{\pi \in (\mathbb{R}_{+})^{n_s\times n_t}
    \;\big|\;
    \sum_{j \in \mathcal{Y}_t} \pi_{(i,j)} = \mu_i \\ 
    &\hspace{2.5cm}\textnormal{ and } \sum_{i \in \mathcal{Y}_s} \pi_{(i,j)} = \nu_j\}
    \end{split}
\end{align}
}
\end{definition}
where, 
{\small
\begin{align*}
    \KL(\pi,\mu \otimes \nu)= \sum_{i,j}[\pi_{i,j} \log \frac{\pi_{i,j}}{\mu_i \nu_j} - \pi_{i,j}+ \mu_i\nu_j],
\end{align*}
}

$n_s = |\mathcal{Y}_s|$ and $n_t = |\mathcal{Y}_t|$. For the considered classification tasks, $\mathcal{Y}_s = \mathcal{Y}_t = \mathcal{Y}$.

Entropic regularisation of OT convexifies the loss function and thus is a computational advantage in computing gradients \cite{luise2018differential, peyre2019computational, feydy2019interpolating}. As $\varepsilon \rightarrow 0^+$, we retrieve the unregularized Wasserstein distance.

\section{Methodology} \label{sec:method}

\subsection{Problem framework}
The main participants in KNOT framework are: 1) a set of $K$ local models $\{\mathcal{M}_k\}_{k \in \mathcal{K}}$, $\mathcal{K}=\{1, \ldots, K\}$, and 2) a global model $\mathcal{M}_g$. We denote the set of local models $\{\mathcal{M}_k\}_{k \in \mathcal{K}}$ by $\mathcal{M}_{\mathcal{K}}$.

\begin{itemize}
    \item A local model learns a user-specific hypothesis $h_{k} \in \mathcal{H}_k$ on the $K\textsuperscript{th}$ user-generated data.
    \item The global model $\mathcal{M}_g$ aims to learn a user-generalized hypothesis $h_{\theta} \in \mathcal{H}_g$ that exists on the central application server.
\end{itemize}

\paragraph{Learning goal.} For a given input sample, $\mathcal{M}_g$ has access to the predictions of the local models. Thus, global model training can benefit from the hypotheses of local models collectively denoted by $h_\mathcal{K}$. However, since we aim for secure distillation (FD), $\mathcal{M}_g$ can not retrieve the local models' parameters $\mathcal{M}_{\mathcal{K}}$ or the user-generated data\footnote[5]{The user-generated data is available to user-specific local model only.}. The global model is generally preoccupied with the knowledge generalizable across the users. The distillation task aims to merge the (semantic) knowledge of local models $h_\mathcal{K}$ into the knowledge of global model $h_\theta$. The knowledge transfer happens with the assistance of a transfer set.

\paragraph{Transfer set.} It is the set of unlabeled i.i.d. samples that create a crucial medium for transferring the knowledge from the local models to the global model. To facilitate the knowledge transfer, we obtain the soft labels from local model predictions on the transfer set. The labels are used as ground truth for $\mathcal{M}_g$ and $h_\theta$ is tuned to minimize the discrepancy between global model $\mathcal{M}_g$'s output and the soft labels.

Since only the $h_\mathcal{K}$ is shared with $\mathcal{M}_g$ to tune $h_\theta$, the local models can have heterogeneous architectures. This is useful when certain client devices do not have enough (memory and compute) resources to run large model architectures. It is noteworthy that our approach, KNOT, satisfies this property, however, we do not explicitly perform experiments on heterogeneous local model architectures.

\begin{table*}[]
\begin{center}
\resizebox{1\textwidth}{!}{
\begin{tabular}{|c|c|c|c|c|c|c|c|c|c|c|c|c|c|}
\hline
 & \multicolumn{4}{c|}{SA} & \multicolumn{3}{c|}{ERC} & \multicolumn{6}{c|}{NLI} \\ \hline
 & Cell & Cloths & Toys & Food & IEMOCAP & MELD & DyDa$_{\{1,2,3\}}$ & Fic & Gov & Slate & Tele & Trv & SNLI \\ \hline

train & 133,574 & 19,470 & 116,666 & 397,917 & 3,354 & 9,450 & 21,680 & 77,348 & 77,350 & 77,306 & 83,348 & 77,350 & 549,367 \\ \hline

valid & 19,463 & 28,376 & 17,000 & 57,982 & 342 & 1,047 & 2,013 & 5,902 & 5,888 & 5,893 & 5,899 & 5,904 & 9,842 \\ \hline

test & 37,784 & 55,085 & 33,000 & 112,555 & 901 & 2,492 & 1,919 & 5,903 & 5,889 & 5,894 & 5,899 & 5,904 & 9,824 \\ \hline \hline

score & 0.49 & 0.52 & 0.49 & 0.64 & 0.55 & 0.45 & 0.38, 0.34, 0.40 & 0.63 & 0.65 & 0.62 & 0.65 & 0.63 & 0.85 \\ \hline

\end{tabular}}
\caption{Data statistics and performance of local models: For SA and ERC tasks, the score denotes the Macro F1 performance metric, while it denotes the Accuracy metric for NLI.} \label{tab:stats_F1}
\end{center}
\end{table*}

\begin{figure*}
\begin{center}
\includegraphics[width=0.8\textwidth]{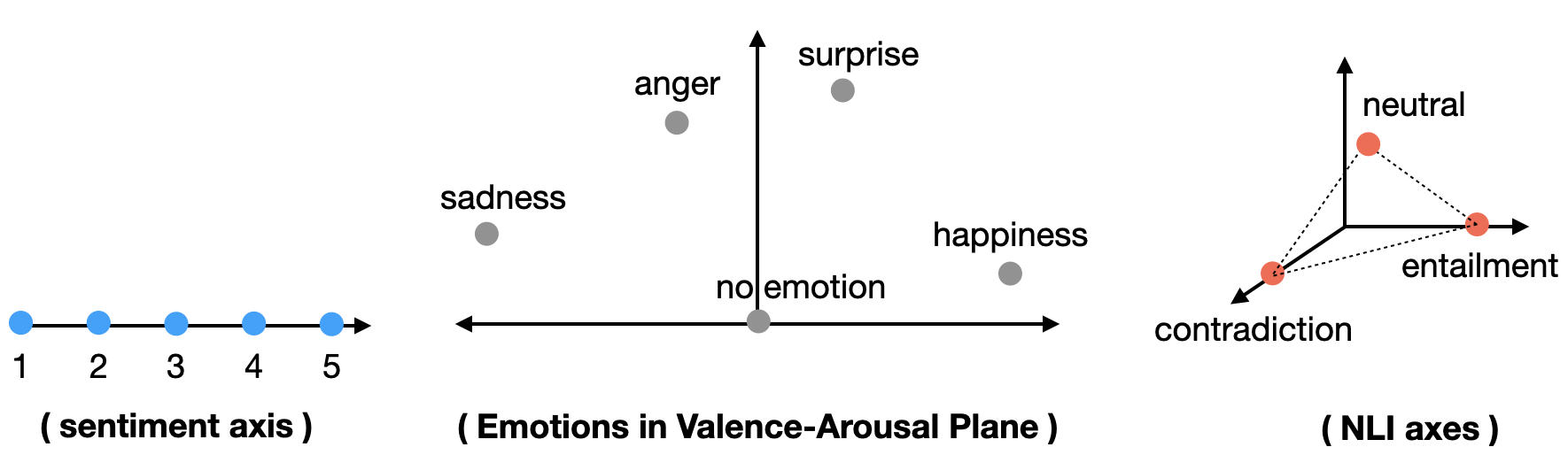}
\end{center}
\caption{Semantic coordinates of SA, ERC, and NLI.} \label{fig:sem_cord}
\end{figure*}

\subsection{Ensemble distillation loss}
For demonstration, we consider a user application that performs sentiment classification task on user-generated text $x^{(i)}=(x_1^{(i)}, x_2^{(i)}, \ldots, x_n^{(i)}) \in \mathcal{X}$ into its sentiment $y^{(i)} \in \mathcal{Y}$, where $\mathcal{X}$ denotes the input space of all possible text strings and the label space defines as

\begin{quote}
$\mathcal{Y}$=\{1(strong negative), 2(weak negative), 3(neutral), 4(weak positive), 5(strong positive)\}.
\end{quote}

In this work, all the hypotheses are of the form $h: \mathcal{X} \mapsto$ {\small $\Delta^\mathcal{Y}$}, {\small$\Delta^\mathcal{Y}$} denotes a probability distribution on the set of labels $\mathcal{Y}$. Under KNOT, we propose a learning algorithm that runs on the central server to fit {\small$\mathcal{M}_g$}'s parameters $\theta$ by receiving predictions such as (softmaxed) logits from $h_{\mathcal{K}}$. Without the loss of generality, the goal is to search for a hypothesis $h_{\hat{\theta}}$ that minimizes the empirical risk
\begin{align}
    h_{\hat{\theta}} = \underset{h_{\theta} \in \mathcal{H}}{\argmin} \Bigg\{\hat{\mathbb{E}}_{S} \big[\;\mathcal{L}_\varepsilon\big(h_{\theta}(x), h_\mathcal{K}(x)\big)\;\big] =\\ \frac{1}{N} \sum_{i=1}^{N} \mathcal{L}_\varepsilon\big(h_{\theta}(x^{(i)}), h_\mathcal{K}(x^{(i)})\big)\Bigg\}. \nonumber
\end{align}

The loss is defined as
\begin{gather} \label{eq:loss_final}
\mathcal{L}_\varepsilon \big( h_{\theta}(x^{(i)}), h_\mathcal{K}(x^{(i)}) \big) = \\ \sum_{k \in \mathcal{K}} \;W_{B_k}\big( h_{k}({x^{(i)}}) \big) \;\;{\OT}_\varepsilon\big(h_\theta({x^{(i)})},\; h_{k}({x^{(i)}})\big) \nonumber
\end{gather}
where ${\OT}_\varepsilon(\cdot,\cdot)$ is the discrepancy  between the two probability measures as its arguments; $W_{B_k}(.)$ is the sample-specific weight assigned to the $k\textsuperscript{th}$ local model's prediction. Next, we elaborate on the the functions ${\OT}_\varepsilon(\cdot,\cdot)$ and $W_{B_k}(.)$ which are crucial for the KNOT algorithm.

\section{Sinkhorn-based distillation}
As an entropy-based loss, we adopt KL divergence. As discussed in \Cref{OT}, we employ Sinkhorn distance to implement OT-based loss which is the proposed KNOT algorithm.

\subsection{Unweighted distillation} \label{UnifDist}
For a text input $x^{(i)}$ from the transfer set, $\OT_\varepsilon\big(h_\theta({x^{(i)})},\; h_{k}({x^{(i)}})\big)$ measures the Sinkhorn distance between the probability output of global model $h_\theta({x^{(i)})}$ and $k\textsuperscript{th}$ local model $h_k({x^{(i)})}$. In \Cref{eq:loss_final}, the sample-wise distance is computed between $h_\theta({x^{(i)})}$ and a probability distribution from the set $h_\mathcal{K}$. A simple approach to fit the global hypothesis $h_\theta$ is to uniformly distill the knowledge from user-specific hypotheses, thus $W_{B_k}\big(h_{k}({x^{(i)}})\big)=1 \;\forall \;k \in \mathcal{K}, \;i \in [N]$.

\subsection{Weighted distillation}\label{bias}
The user-generated local datasets are potentially non-IID with respect to the global distribution and possess a high degree of class imbalance \cite{weiss2001effect}. As each local model $\mathcal{M}_k$ is trained on samples from potentially non-IID and imbalance domains, they are prone to show skewed predictions. The unweighted distillation tends to transfer such biases. One might wonder ``for a given transfer set sample, which local model's prediction is reliable?''.  Although an open problem, we try to answer it by proposing a local model (teacher) weighting scheme. It calculates the confidence score of a model's prediction and performs weighted distillation---weights being in positive correlation with the local model's confidence score. Next, we define the confidence score.

\paragraph{Confidence score (L\textsuperscript{2})}
For a given sample $x$ from the transfer set, the skew in a local model's prediction $h(x)$ can help determine the confidence ($W(\cdot)$ in \Cref{eq:loss_final}) with which it can transfer its knowledge to the global model. However, the local models can show skewed predictions due to training on an imbalanced dataset or chosen capacity of the hypothesis space which can potentially cause the local model to overfit/underfit on user data \cite{caruana2001overfitting}. For instance, a model has learned to misclassify negative sentiment as strongly negative samples owing to a high confusion rate. Such models are prone to show inference time classification errors with highly skewed probabilities. Thus, confidence scoring based on the probability skew may not be admissible. Hence, we incorporate L\textsuperscript{2} confidence for confidence calculation. For a given sample, we define the model's L\textsuperscript{2} confidence score $W_{B}(h(x))$ as the Euclidean distance of its output probability distribution from the probability bias $B$. We define probability bias $B$ of a local model as the prediction when a model $h$ receives random noise at the input. For classification tasks, random texts are generated by sampling random tokens from the vocabulary. Let $h(x) \in \mathcal{Y}$ denotes the predicted distribution of a model for an input text $x$:
\begin{flalign*}
     b_{l \in \mathcal{Y}} &\coloneqq \mathbb{E}_{x \sim \mathcal{N}}[h(x) = l] & \text{{\small ($\mathcal{N}$: the distribution of noise)}}\\
    B &\coloneqq (b_1, \ldots, b_{|\mathcal{Y}|}) &\text{{\small(model probability bias)}}
\end{flalign*}
\begin{definition}
We define $\mathbf{W_{B_k}(h_k)}$ as the L\textsuperscript{2} distance of $k\textsuperscript{th}$ model's prediction from its probability bias $B_k$.
\end{definition}

We provide a detailed analysis of the L\textsuperscript{2}-based confidence metric in the Appendix.

\subsection{Statistical properties}
We derive generalization bounds for the distillation with pure OT loss, i,e, Wasserstein Distance. Let the samples 

$S=\{(x^{(1)},y^{(1)}), \ldots, (x^{(N)},y^{(N)})\})$ 
\\
be IID from the domain distribution of the transfer set and $h_{\hat{\theta}}$ be the empirical risk minimizer. Assume the global hypothesis space $\mathcal{H}_g = \mathfrak{s} \; \circ \;\mathcal{H}_g^{o}$, i.e., composition of softmax and a hypothesis $\mathcal{H}_g^{o}: \mathcal{X} \mapsto \mathbb{R}^{|\mathcal{Y}|}$, that maps input text to a scalar (logit) value for each label. Assuming vanilla Sinkhorn with $\varepsilon \rightarrow 0^+$, we establish the property for 1-Wasserstein.

\begin{theorem} If the global loss function (as in \Cref{eq:loss_final}) uses unregularized 1-Wasserstein metric between predicted and target measure, then for any $\delta > 0$, with probability at least 1-$\delta$ \label{th:risk}
{\small
\begin{flalign*}
    \mathbb{E}\big[\mathcal{L}(h_{\hat{\theta}}(x), h_\mathcal{K}(x))\big] &\leq \underset{h_\theta \in \mathcal{H}_g^o}{\inf} \hat{\mathbb{E}}\big[\mathcal{L}(h_{\theta}(x), h_\mathcal{K}(x))\big] + \\ &32\times|\mathcal{Y}|\times\mathfrak{R}_N(\mathcal{H}_g^{o}) + \\
    &2C_M|\mathcal{Y}| \sqrt{|\mathcal{Y}| \frac{log 1/ \delta}{2N}}
\end{flalign*}
}%
\end{theorem}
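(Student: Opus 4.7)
The plan is to establish the bound as a standard Rademacher complexity--based generalization argument tailored to the $1$-Wasserstein loss, closely following the strategy of \citet{frogner2015learning} but with the ``target'' distribution being the weighted ensemble $h_\mathcal{K}(x)$ of local teachers rather than a single one-hot ground truth. Because $h_{\hat{\theta}}$ is the empirical risk minimizer, the one-sided decomposition
\[
\mathbb{E}[\mathcal{L}(h_{\hat{\theta}}, h_\mathcal{K})] - \inf_{h_\theta \in \mathcal{H}_g^o} \hat{\mathbb{E}}[\mathcal{L}(h_\theta, h_\mathcal{K})] \;\leq\; \sup_{h_\theta \in \mathcal{H}_g^o} \bigl(\mathbb{E}[\mathcal{L}(h_\theta, h_\mathcal{K})] - \hat{\mathbb{E}}[\mathcal{L}(h_\theta, h_\mathcal{K})]\bigr)
\]
holds, so it is enough to bound this uniform deviation with high probability.

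First I would apply McDiarmid's bounded-differences inequality to the right-hand side. The $1$-Wasserstein distance between two distributions on the label space of semantic diameter $C_M$ is bounded by $C_M$, and after multiplying by the bounded mixing weights $W_{B_k}(\cdot)$ and accounting for the dimension-dependent sensitivity of the loss one gets a uniform $L^\infty$ bound of order $C_M|\mathcal{Y}|$ on $\mathcal{L}$. Plugging the resulting per-sample bounded-difference constants into McDiarmid gives, with probability at least $1-\delta$, an additive concentration term of the stated form $2C_M|\mathcal{Y}|\sqrt{|\mathcal{Y}|\log(1/\delta)/(2N)}$, where the extra $\sqrt{|\mathcal{Y}|}$ inside the square root is absorbed from tracking the per-coordinate sensitivity of $W_1$.

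Next I would use the standard symmetrization inequality to bound the expected supremum by $2\,\mathfrak{R}_N(\mathcal{L}\circ \mathcal{H}_g)$, after which the substantive step is to contract this back to $|\mathcal{Y}|\,\mathfrak{R}_N(\mathcal{H}_g^o)$ using two Lipschitz arguments: (i) by the Kantorovich--Rubinstein duality, $W_1(\cdot,\nu)$ with cost bounded by $C_M$ is Lipschitz in its first argument (viewed as an element of the simplex) in the $\ell_1$ sense, with Lipschitz constant controlled by $C_M$; (ii) the softmax $\mathfrak{s}$ is coordinatewise $1$-Lipschitz. Peeling off softmax and then $W_1$ via a vector-valued Talagrand contraction (for instance Maurer's lemma) introduces the factor of $|\mathcal{Y}|$ coming from the dimensionality of the softmax output, and collecting the symmetrization factor with the Lipschitz constants of softmax and $W_1$ produces the leading numerical constant $32$.

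I expect the contraction argument in the last step to be the principal obstacle. The $1$-Wasserstein distance is not a pointwise function of its probability-vector argument but is the value of a linear program over couplings, so extracting a clean coordinatewise Lipschitz constant requires the Kantorovich--Rubinstein dual representation; and because the global hypothesis is vector-valued over the $|\mathcal{Y}|$ logits and then passed through softmax, the vector-valued Talagrand contraction must be applied carefully enough (going through a per-coordinate bound of the Rademacher complexity of the vector-valued class and only then through $W_1$) to avoid accumulating spurious additional factors of $|\mathcal{Y}|$ or $C_M$ beyond the ones asserted in the statement.
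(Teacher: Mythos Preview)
Your proposal is correct and follows essentially the same route as the paper: an ERM decomposition reducing to a uniform deviation, McDiarmid concentration using the $C_M$-boundedness of the $1$-Wasserstein loss, symmetrization to the Rademacher complexity of the loss class, and a vector-valued Talagrand contraction back to $\mathfrak{R}_N(\mathcal{H}_g^{o})$ via the Lipschitz property of the softmax-composed-with-$W_1$ map. The only cosmetic differences are that the paper invokes Proposition~B.10 of \citet{frogner2015learning} directly for the Lipschitz constant (rather than Kantorovich--Rubinstein plus a separate softmax bound) and disposes of the mixing weights $W_{B_k}$ by reinterpreting the weighted sum as an unweighted sum under a reweighted sampling law, but the skeleton of the argument is identical.
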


where $\mathfrak{R}_N(\mathcal{H}_g^{o})$, decays with $N$, denotes Rademacher complexity \cite{bartlett2002rademacher} of the hypothesis space $\mathcal{H}_g^o$. $C_M$ is the maximum cost of transportation within the label space. In the case of SA, $C_M = 4, |\mathcal{Y}|=5$. The expected loss of the empirical risk minimizer $h_{\hat{\theta}}$ approaches the best achievable loss for $\mathcal{H}_g$. The proof of theorem \Cref{th:risk} and method to compute gradient are relegated to the Appendix.

\begin{table*}[]
\begin{center}
\resizebox{0.95\textwidth}{!}{
\begin{tabular}{c|ccccc|ccccc}
\hline
\multirow{3}{*}{Algorithm} & \multicolumn{5}{c|}{F1 Score}  & \multicolumn{5}{c}{ Semantic Distance}    \\ 
    & \multicolumn{3}{c}{-------Local-------} & Global & \multirow{2}{*}{ALL} & \multicolumn{3}{c}{-------Local-------} & Global & \multirow{2}{*}{ALL} \\ 
    & Cloths   & Toys   & Cell   & Food   &    & Cloths   & Toys   & Cell   & Food   &       \\ 
    Entropy-A  & 0.48 & 0.44 & 0.47 & 0.52 & 0.50  & 0.77 & 0.87 & 0.76 & 0.79 & 0.79   \\
    
    Entropy-D  & 0.48 & 0.44 & 0.46 & 0.56 & 0.52  &0.77 &0.86 &0.77 &0.71 &0.78   \\ 

    Entropy-U  & 0.47 & 0.43 & 0.47 & 0.50 & 0.49  &0.79 &0.90 &0.78 &0.82 &0.83   \\ 

    Entropy-E  & 0.49 & 0.46 & 0.48 & 0.55 & 0.52  & 0.74 & 0.80 & 0.74 & 0.71 & 0.75   \\ \hline

    Sinkhorn-A & 0.49 & 0.47 & 0.47 & 0.55 & 0.52  & 0.74 & 0.80 & \textbf{0.72} &0.75 & 0.75   \\ 

    Sinkhorn-D & 0.47 & 0.44 & 0.45 & 0.59 & 0.52  &0.77 &0.84  &0.76 &\textbf{0.65} &0.76   \\ 

    Sinkhorn-U & 0.48 & 0.44 & 0.47 & 0.51 & 0.49  &0.77 &0.89  &0.77 &0.83 &0.82   \\ 

    Sinkhorn-E & 0.49 & 0.47 & 0.48 & 0.55 & 0.52  & \textbf{0.72} &\textbf{0.78}  &\textbf{0.72} &0.69 &\textbf{0.73}   \\ \hline
    
\end{tabular}}
\caption{Fine-grained SA task: Macro F1 and  Semantic Distance.} \label{tab:SA-1}
\end{center}
\end{table*}

\section{Experiments} \label{sec:expe_setup}
\paragraph{Baselines.} We setup the following baselines for a thorough comparison between Sinkhorn\footnote[6]{Here, we use KNOT and Sinkhorn interchangeably.} and entropy-based losses. Let [\textit{Method}] be the placeholder for Sinkhorn and Entropy. [\textit{Method}]-A denotes unweighted distillation of local models (\Cref{UnifDist}), i.e., $W_{B_k}(h_{k}(x))=1$ (in \Cref{eq:loss_final}). In [\textit{Method}]-D, $W_{B_k}(h_{k}({x}))$ is proportional to size of local datasets. [\textit{Method}]-U defines sample-specific confidence (weights) as the distance of output from the uniform distribution over labels. For each sample, [\textit{Method}]-E computes weight of $k\textsuperscript{th}$ local model as distance of its prediction from probability bias $B_k$, i.e., L\textsuperscript{2} confidence.

\begin{figure}
\begin{center}
\includegraphics[width=0.3\textwidth]{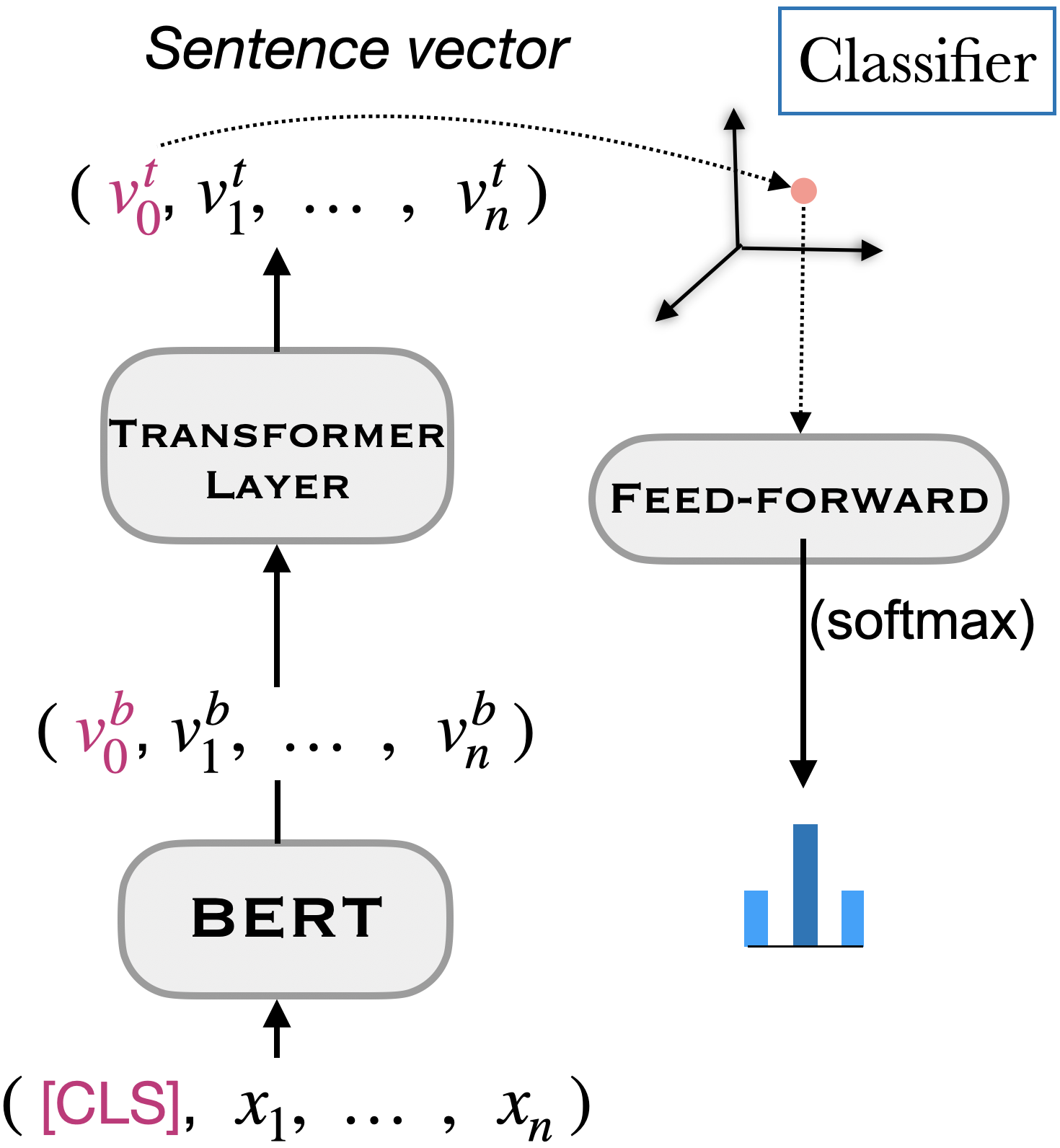}
\end{center}
\caption{Model architecture.} \label{fig:arch}
\end{figure}

\paragraph{Tasks.} We set up the three natural language understanding tasks that possess inter-class semantics: 1) fine-grained sentiment analysis (SA), 2) emotion recognition in conversation (ERC)~\cite{ghosal2022cicero,poria2020beneath}, and 3) natural language inference (NLI). NLI is the task of determining the inference relation between two texts. The relation can be entailment, contradiction, or neutral \cite{maccartney-manning-2008-modeling}. For a given transcript of a conversation, the ERC task aims to identify the emotion of each utterance from the set of pre-defined emotions \cite{poria2019emotion,hazarika2021conversational}. For our experiments, we choose the five most common emotions that are sadness, anger, surprise, happiness, and no emotion. 

\paragraph{Datasets.} For the SA task, we use four large-scale datasets: 1) Toys: toys and games; 2) Cloths: clothing and shoes; 3) Cell: cell phones and accessories; 4) Food: Amazon’s fine food reviews, specifically curated for the five-class sentiment classification. For transfer set (\Cref{sec:method}), we use grocery and gourmet food (104,817 samples) and discard the provided labels \cite{he2016ups}. Each dataset consists of reviews rated on a scale of 1 (strongly negative) to 5 (strongly positive). Similarly, for ERC, we collect three widely used datasets: DyDa: DailyDialog \cite{li2017dailydialog}, IEMOCAP: Interactive emotional dyadic motion capture database \cite{busso2008iemocap}, and MELD: Multimodal EmotionLines Dataset \cite{poria2018meld}. To demonstrate our methodology, we partition the DyDa dataset into four equal chunks. DyDa$_1$, DyDa$_2$, are used as local, DyDa$_3$ is used as global dataset. Dropping the labels from DyDa$_4$, we use it as a transfer set. For NLI task, we use SNLI \cite{bowman2015large} as global dataset and MNLI \cite{williams2017broad} as local dataset. We split the latter across its 5 genres, which are, fiction (Fic), government (Gov), telephone (Tele), travel (Trv), and Slate. This split assists in simulating distinct user (non-IID samples) setup. We use ANLI dataset \cite{nie2019adversarial} as a transfer set.

\begin{table*}[ht]
\begin{center}
\resizebox{0.95\textwidth}{!}{
\begin{tabular}{c|cccccc|cccccc}
\hline
\multirow{3}{*}{Algorithm} & \multicolumn{6}{c|}{F1 Score}  & \multicolumn{6}{c}{ Semantic Distance}    \\ 
    & \multicolumn{4}{c}{-------Local-------} & Global & \multirow{2}{*}{ALL} & \multicolumn{4}{c}{-------Local-------} & Global & \multirow{2}{*}{ALL} \\ 
    & MELD   & IEMOCAP   & DyDa$_0$  & DyDa$_1$ & DyDa$_2$  &   & MELD   & IEMOCAP   & DyDa$_0$  & DyDa$_1$ & DyDa$_2$   &   \\ 
Entropy-A  &0.28  &0.21  &0.34  &0.33  &0.36  &0.31  &0.67  &0.65  &0.60  &0.61  &0.63  &0.64   \\ 

Entropy-D  &0.30  &0.21  &0.39  &0.35  &0.38  &0.34  &0.68  &0.65  &0.57  &0.59  &0.61  &0.62   \\ 

Entropy-U  &0.30  &0.24  &0.42  &0.31  &0.37  &0.33  &0.68  &0.65  &0.60  &0.62  &0.64  &0.64   \\ 

Entropy-E  &0.34  &0.36  &0.42  &0.40  &0.44  &0.39  &0.69  &\textbf{0.62}  &0.57  &0.59  &0.62  &0.62   \\ \hline

Sinkhorn-A  &0.30  &0.26  &0.45  &0.37  &0.39  &0.34  &0.67  &0.67  &0.59   &0.62  &0.62   &0.64   \\ 

Sinkhorn-D  &0.35  &0.31  &0.45  &0.37  &0.44  &0.39  &0.67  &0.63  &0.54   &0.59  &0.61   &0.61   \\ 

Sinkhorn-U  &0.30  &0.23  &0.39  &0.34  &0.39  &0.34  &0.68  &0.68  &0.62   &0.64  &0.64   &0.65   \\ 

Sinkhorn-E  &0.38  &0.33  &0.46  &0.43  &0.43  &0.41  &\textbf{0.64}  &\textbf{0.62}  &\textbf{0.53}   &\textbf{0.56}  &\textbf{0.60}   &\textbf{0.59 }  \\ \hline

\end{tabular}}
\caption{ERC task: Macro F1 and Semantic Distance.}
\label{tab:SA-2}
\end{center}
\end{table*}

\begin{table*}
\begin{center}
\resizebox{0.95\textwidth}{!}{
\begin{tabular}{c|ccccccc|ccccccc}
\hline
\multirow{3}{*}{Algorithm} & \multicolumn{7}{c|}{Accuracy}  & \multicolumn{7}{c}{ Semantic Distance}    \\ 
    & \multicolumn{5}{c}{-------------Local-------------} & Global & \multirow{2}{*}{ALL} & \multicolumn{5}{c}{-------------Local-------------} & Global & \multirow{2}{*}{ALL} \\ 
    & Fic   & Gov   & Slate   & Tele & Trv & SNLI   &    & Fic   & Gov   & Slate   & Tele & Trv & SNLI   &       \\ 
Entropy-A  &0.60 &0.62 &0.60 &0.60 &0.62 &0.78 &0.65 & 0.56 &0.54 &0.56 &0.56 &0.55 &0.40 & 0.53   \\ 

Entropy-D  &0.58  &0.59  &0.57  &0.57  &0.58  &0.85  &0.64  & 0.58 &0.56  &0.58  &0.58  &0.57  &0.30  & 0.53   \\

Entropy-U  &0.60 &0.62 &0.60 &0.60 &0.61 &0.76 &0.65 & 0.55 &0.54 &0.56 &0.56 &0.55 &0.40 & 0.53   \\

Entropy-E  &0.60 &0.62 &0.60 &0.60 &0.61 &0.76 &0.65 & 0.55 &0.54 &0.56 &0.55 &0.54 &0.39 & 0.52   \\ \hline

Sinkhorn-A &0.60 &0.63 &0.60 &0.61 &0.62  &0.73 &0.64 &0.54 &0.53 &0.55 &0.55 &0.53 &0.42 &0.52   \\

Sinkhorn-D  &0.55  &0.57 &0.54 &0.55 &0.56 &0.85 &0.63 & 0.58 &0.56 &0.58 &0.58 &0.57 &\textbf{0.23} & 0.52   \\

Sinkhorn-U  &0.60 &0.62 &0.60 &0.60 &0.61 &0.77 &{0.65} &\textbf{0.53} &\textbf{0.52} &{0.55} &\textbf{0.54} &\textbf{0.52} &{0.37} &{0.51}   \\

Sinkhorn-E &0.60 &0.62 &0.60 &0.60 &0.61 &0.77 &0.65 &\textbf{0.53} &\textbf{0.52} &\textbf{0.54} &\textbf{0.54} &\textbf{0.52} &{0.37} &\textbf{0.50}   \\ \hline

\end{tabular}}
\caption{NLI task: Accuracy and  Semantic Distance.}
\label{tab:SA-3}
\end{center}
\end{table*}

\paragraph{Architecture.} We set up a compact transformer-based model used by both global and local models (\Cref{fig:arch}), although, the federation does not restrict both the local and model architectures to be the same. The input is fed to the pretrained BERT-based classifier \cite{devlin2018bert}. Thus, we obtain probabilities with support in the space of output labels, i.e., $\mathcal{Y}$. We keep all the parameters trainable, hence, BERT will learn its embeddings specific to the classification task. For the NLI task, we append premise and hypothesis at input separated by special token [\texttt{SEP}] token, followed by a standard classification setup.

\Cref{tab:SA-1}, \Cref{tab:SA-2}, and \Cref{tab:SA-3} show performance, i.e., Macro-F1 (or Accuracy) score and Semantic Distance of global models predictions from ground truth. Evaluations are done on fine-tuned (after distillation) global model with respect to the test sets of both local and global datasets. The testing over local datasets will help us analyze how well the domain generic global model performs over the individual local datasets and the testing over the global dataset is to make sure there is no catastrophic forgetting of the previous knowledge.

\paragraph{Training local models.} To compare Sinkhorn-based distillation with baselines, first, we pretrain local models. Since cross-entropy (CE) loss is less computationally expensive as compared to OT, we use CE for local model training. For all the models, we tuned hyperparameters and chose the model that performs best on the validation dataset. The data statistics and performances of local models on individual tasks are shown in \Cref{tab:stats_F1}.

\paragraph{Training global model.} We make use of transfer set samples to obtain noisy labels from local models. For a text sample in the transfer set, \Cref{eq:loss_final} aims to fit a global model to the weighted sum of predictions of the local models. To retain the previous knowledge of a global model and prevent catastrophic forgetting, we adapt the learning without forgetting paradigm. We store predictions of the pretrained global model on the transfer set and treat it similarly to the set of noisy labels obtained from the local models and perform its weighted distillation along with the local models.

\paragraph{Label-space.} We define label semantic spaces for the three tasks. As shown in  \Cref{fig:sem_cord}, we assign sentiment labels a one-dimensional space. For the ERC task, we map each label to a two-dimensional valence-arousal space. Valence represents a person's positive or negative feelings, whereas arousal denotes the energy of an individual's affective state. As mentioned in \cite{ahn2010asymmetrical}, anger (-0.4, 0.8), happiness (0.9, 0.2), no emotion (0, 0), sadness (-0.9, -0.4), and surprise (0.4, 0.9). The cost (loss) incurred to transport a mass from a point $p$ to point $q$ is $C_{p,q} \coloneqq |p-q|$. For NLI task, we define coordinates with entailment (1, 0, 0), contradiction (0,0,1) and neutral (0.5, 1, 0.5). The cost $C_{p,q} \coloneqq ||p-q||_2$, where, cost of transport from entailment to contradiction is higher than it is to neutral. It is noteworthy that for this task, we perform a manual search to identify label coordinates.

For the SA task in \Cref{tab:SA-1}, we observe the global models trained from Sinkhorn distillation of local models (contribution C1) perform better than corresponding Entropy-based variants on combined datasets (ALL) as well as on local and global datasets. Sinkhorn-A, D, U, and E are more semantically accurate in their predictions as compared to Entropy-A, D, U, and E, respectively. Moreover, we also notice that Entropy-E and Sinkhorn-E are better than corresponding A, D, and U variants, thus proving the utility of our contribution C2. 

For the ERC task in \Cref{tab:SA-2}, we observe the SD score of Sinkhorn-E is, in general, better amongst the Entropy and Sinkhorn-based baselines. In \Cref{tab:SA-3} of the NLI task, we notice the uniform weighting scheme performs as well as L\textsuperscript{2} on local datasets, however, it lags behind Sinkhorn-E in overall performance. As we observed for the SA task in Entropy-D and Sinkhorn-D settings, since the SNLI (global) dataset is bigger, the distillation forces the global model to perform better on the global dataset. It is observed to come at the cost of degraded performance on the other (local) datasets.

Comparing \Cref{tab:SA-1}, \Cref{tab:SA-2}, and \Cref{tab:SA-3} all together, for the three tasks with intrinsic similarity in the label space, we observe Sinkhorn-based loss transfer more semantic knowledge than entropy-based losses in the secure federated distillation setup. Moreover, we observe that L\textsuperscript{2} distance ([\textit{Method}]-E) gives better SD scores amongst the loss groups based on Entropy and Sinkhorn. Besides this, as compared to other baselines, empirical observations suggest that Sinkhorn-E (our combined contribution C1 and C2) works well for large-scale SA datasets, hence potentially scalable.

When we compare SA and ERC tasks with respect to standard metric scores, our method Sinkhorn-E is amongst the better performing models with the best accuracy and Macro F1 scores in 4 out of 5 tasks in SA and 4 out of 6 tasks in ERC. The model performs on par with baselines on the NLI task. Also, we find the Sinkhorn-based weighted distillation (Sinkhorn-E) shows a 2\% improvement on SA and ERC tasks while a 1\% average improvement on the NLI task when it is evaluated on the SD metric.

\section{Conclusion}
This work proposed KNOT, i.e., a novel optimal transport (OT)-based natural language semantic knowledge distillation. For the tasks with intrinsic label similarities, the OT distance between the predicted probability of the central (global) model and user-specific (local) models is minimized. To reduce the potential hazard of bias transfer from local model distillation, we introduced a weighting scheme based on the L\textsuperscript{2} distance between the local model's prediction and probability bias. Our experiments on three language understanding tasks —fine-grained sentiment analysis, emotion recognition in conversation, and natural language inference—show consistent semantic distance improvements while performing as good as the entropy-based baselines on the accuracy and F1 metrics.

\section*{Acknowledgement}
This work is supported by the A*STAR under its RIE 2020 AME programmatic grant RGAST2003 and project T2MOE2008 awarded by Singapore’s MoE under its Tier-2 grant scheme.

\bibliography{main}
\bibliographystyle{main}

\clearpage

\appendix

\onecolumn
\label{append}

\section{What have we kept for the Appendix?}
We include proofs of \Cref{th:risk} and an analysis of the L\textsuperscript{2}-based confidence score. There are a few experiments that we consider to be important and may help compare the OT loss-based learning with Kullback–Leibler (KL). These results build a firm base to choose Sinkhorn-based (OT) losses on the task of federated distillation of sentiments. For the experiments, we work on the global model $\mathcal{M}_g$ that has acquired knowledge from local models in the learning without forgetting the paradigm.  

\begin{itemize}
    \item In \cref{Confidence_L2}, we provide an analysis of the L\textsuperscript{2}-based confidence metric.
    \item In \cref{DB}, we convey the intuition behind using a Sinkhorn distance over an entropy-based divergence. Moving further in \cref{DB1}, we show the importance of natural metrics in the label space by replacing the one-dimensional support with one-hot. Furthermore, in \cref{DB2}, we show how the model's clusters of sentence embeddings change when we move from a Sinkhorn distance-based loss to the KL divergence-based loss.
    \item In \cref{app_bias}, we discuss the potential risk of gender and racial bias transfer from the local models to the global models. Although we incorporate bias induced from non-IID data training of the local models, we do not tackle the transfer of other biases that can arise from the data as well as from the training process.
    \item In \cref{grad_comp}, we provide the algorithm to compute gradients and the computation complexity of Sinkhron loss.
    \item In \cref{risk_bound}, we provide the proof of \Cref{th:risk} on the empirical risk bound with the OT metric as unregularized 1-Wasserstein distance.
    \item In \cref{Soc_Impct}, we discuss the broader social impact of our work. We discuss how the method can be adopted for cyberbullying detection and the limitations coming from local models.
    \item In \cref{reprod}, we elaborate on the experimental settings and license of the datasets used in this paper.
\end{itemize}

\section{Confidence score (L\textsuperscript{2})} \label{Confidence_L2}

As shown in \Cref{fig:sink_conf} for a three-class classification, the equidistant distributions lie on an arc with a center at $B$. Points with high confidence scores lie on distant arcs. As radius of the arc increases, majority of its portion lies towards the high value of $p_l$, i.e., the $l$ with which the model is biased against since $b_l$ = min $\{b_1,\ldots,b_{|\mathcal{Y}|}\}$ ($p_3$ in the figure). Moreover, the maximum confidence score is achieved at the vertex $p_l=1$.

\begin{proposition}
From a given point $B$ in a k-simplex, point with the highest confidence lies on one of its vertices.
\end{proposition}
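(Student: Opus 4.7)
The plan is to recognize this as an instance of the classical fact that a convex function on a compact convex set attains its maximum at an extreme point. Concretely, I would set $f(p) = \|p - B\|_2$ where $p$ ranges over the probability simplex $\Delta^{k} = \{p \in \mathbb{R}_{+}^{k+1} : \sum_i p_i = 1\}$, and note that $f$ is the composition of an affine map with a norm, hence convex.

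First I would recall that the probability simplex is compact and convex, with exactly $k+1$ extreme points, namely the vertices $e_1, \ldots, e_{k+1}$ (the one-hot distributions). Then I would invoke the standard maximum principle for convex functions: if $g$ is convex on a compact convex set $K$, then $\max_{x \in K} g(x)$ is attained at some extreme point of $K$. The short justification is that any $p \in \Delta^k$ admits a convex representation $p = \sum_i \lambda_i e_i$ with $\lambda_i \geq 0$ and $\sum_i \lambda_i = 1$, so by convexity $f(p) \leq \sum_i \lambda_i f(e_i) \leq \max_i f(e_i)$.

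Applying this to $f(p) = \|p - B\|_2$ gives $\max_{p \in \Delta^k} \|p - B\|_2 = \max_{i} \|e_i - B\|_2$, which is exactly the claim that the highest-confidence point lies at a vertex of the simplex. I would also remark, for concreteness, that among the vertices the maximum is attained at the index $i^*$ minimizing $b_{i^*}$, since $\|e_i - B\|_2^2 = (1 - b_i)^2 + \sum_{j \neq i} b_j^2 = 1 - 2b_i + \|B\|_2^2$ is a decreasing function of $b_i$; this matches the intuition described just before the proposition.

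There is no real obstacle here: the only thing to be careful about is making sure the reader sees why convexity of $f$ is enough (i.e.\ that we do not need strict convexity, and that attainment follows from compactness), and to state the extreme-point characterization of $\Delta^k$ explicitly so that the conclusion ``lies on one of its vertices'' is unambiguous.
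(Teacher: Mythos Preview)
Your argument is correct. You recognize that $f(p)=\|p-B\|_2$ is convex (norm composed with an affine map), invoke compactness of the simplex for attainment, and use the maximum principle for convex functions on compact convex sets to conclude the maximum sits at an extreme point, i.e.\ a vertex. Your closing computation identifying the maximizing vertex as $e_{i^*}$ with $b_{i^*}=\min_i b_i$ is also correct and matches the paper's remark.

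This is, however, a genuinely different route from the paper. The paper does \emph{not} appeal to convexity or the extreme-point maximum principle. Instead it works by hand on the $2$-simplex: it sets $f_P=\sum_i (p_i-b_i)^2$, uses the extreme value theorem to get existence of extrema on the closed bounded feasible region, observes that the unique interior critical point $p=B$ is the minimizer, and then restricts to the boundary, which consists of three $1$-simplices. On each edge it evaluates $f_P$ at the two endpoints, obtains the three candidate values $k-2b_1$, $k-2b_2$, $k-2b_3$ with $k=\|B\|_2^2+1$, and reads off that the largest occurs at the vertex with smallest $b_i$. It then asserts that the same boundary-reduction step iterates in higher dimensions, each pass dropping to lower-dimensional faces until only vertices remain. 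Your approach buys a one-line, dimension-independent proof with no case analysis and no inductive hand-wave; the paper's approach buys an explicit, elementary calculation that avoids citing the convex-maximum principle but is longer and leaves the general-$k$ step only sketched.
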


\begin{proof}
First, we analyze the case of a 2-simplex defined in a three-dimensional Euclidean space. Let $f_P =\sum_{i=1}^{3} (p_i - b_i)^2$, the quadratic program can be formulated as $\text{max}\{f_P  :\;\sum_{i=1}^{3}p_i=1,\; p_i\geq0\}$. The convex hull of vertices lying on the axes forms a closed and bounded feasible region. Thus, from the extreme value theorem, there exists absolute maximum and minimum. $f$ attains its minimum at $p=b$, which is also the critical point of $f_P$. Now, we need to find its value on the boundary points contained in the set of 1-simplices (line segments) $\{p_i+p_j=1, p_k=0: (i,j,k) \in {1,2,3}, i\neq j\neq k \}$. For the 1-simplex $p_1+p_2=1, p_3=0$, the values of $f_P$ at its endpoints that are $(1-b_1)^2 + b_2^2 + b_3^2$ and $b_1^2 + (1-b_2)^2 + b_3^2$, one of which is maxima of $f$ attained over the 1-simplex \footnote{Ignoring the critical point which gives the minima and perpendicular drawn from $b$ to the line segment.}. Similarly for the other line segments, the complete set of boundary values of $f_P$ is $k-2b_1$, $k-2b_2$, and $k-2b_3$ where $k = b_1^2 + b_2^2 + b_3^2+1$, occurring at $P=(1,0,0), (0,1,0) \text{ and } (0,0,1)$, respectively. Thus, the maximum of $f_P$ will lie on $i$\textsubscript{th}-axis such that $b_i = \text{min }(b_1, b_2, b_3)$. This proof can be generalized for a probability simplex in higher dimensions. As shown above, each iteration of a lower dimensional simplex will return vertices as the point of maxima in the end. 
\end{proof}

\begin{figure}
\centering
\includegraphics[width=0.3\textwidth]{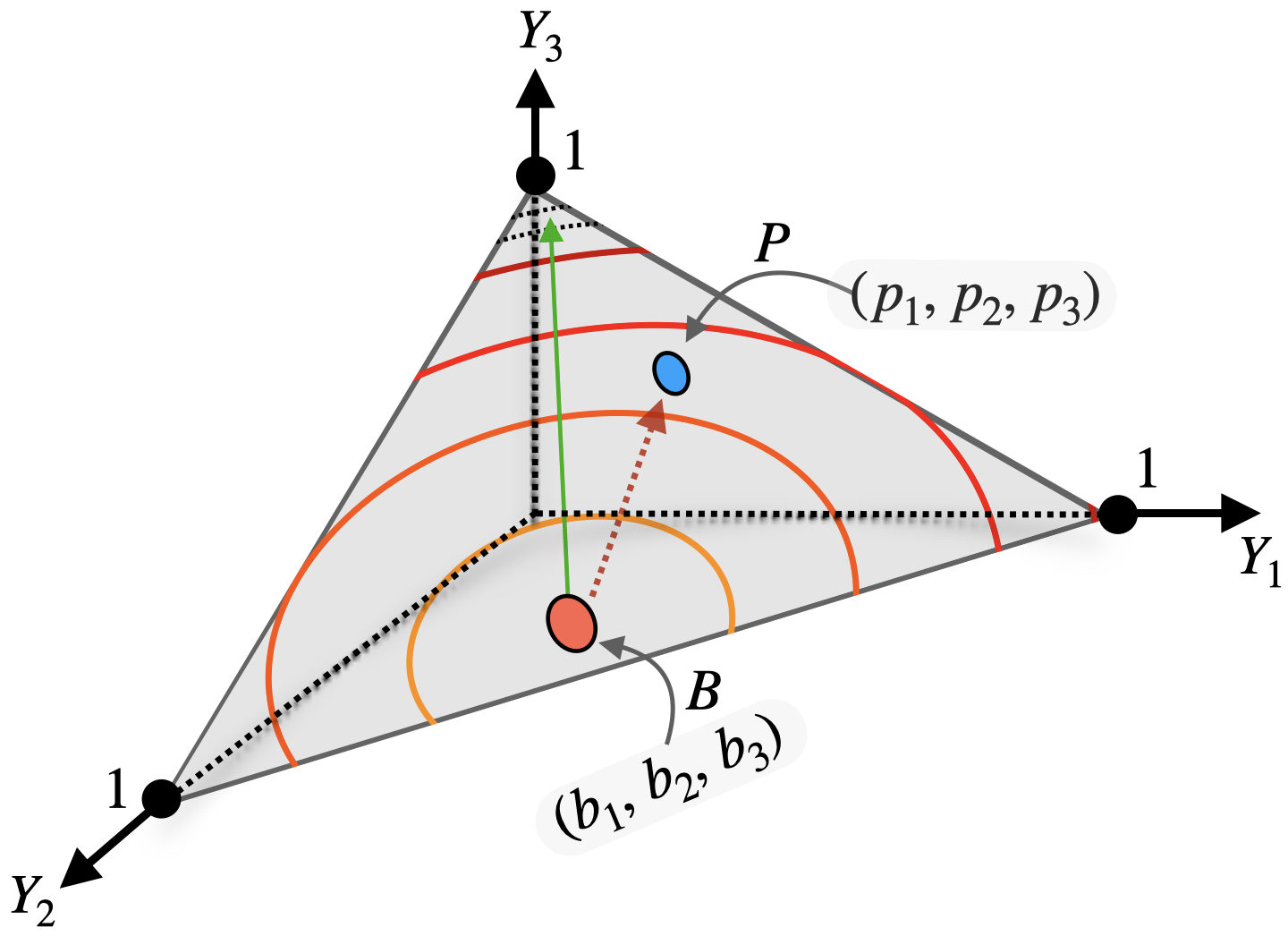}
\caption{\footnotesize{An illustration---a three-class classifier with bias $B$ and outputs a distribution $P~=~(p_1, p_2, p_3)$ for certain input. The green arrow denotes the direction of increased confidence score with equiconfident arcs.}} \label{fig:sink_conf}
\end{figure}

\section{Decision boundaries via sentence representations} \label{DB}

One of the main advantages of using Optimal Transport-based (OT) metrics between two probability distributions, such as Sinkhorn distance, is the ability to define the relationship in metric space. This is not feasible in entropy-based divergences. The relationship further appears in the loss function that accounts for the error computations of an intelligent system in the task of classification (or regression). With advancements in computations of Sinkhorn distances, as in \cite{feydy2019interpolating}, gradient computations through such loss functions have become more feasible as shown in \Cref{algorithm}. The inter-label relationships are apparent in tasks such as fine-grained sentiment classification, fake news detection, and hate speech. In this work, we consider the relationship between two labels $p$ and $q$ as a taxi-cab distance in the one-dimensional metric space of sentiment labels $\mathcal{Y}$. This relation nuance should appear in the Sinkhorn distance, we call it the cost of transportation from a point $p$ to another point $q$ in the set $\mathcal{Y}$.

When we set a learning algorithm to minimize the loss function, the goal is to find model parameters that provide the least empirical risk in the space of predefined hypotheses. From the distance-based cost (loss), the risk is expected to be minimum when the predicted labels are mapped "near" to the ground truth label. The term "near" refers to the lower optimal transport cost of the probability mass spread over a certain region to another region. In our problem, both the regions are the same, i.e., locations from 1 to 5 in the metric space. A ground truth probability mass (almost everything) at 1 would prefer an intelligent system to predict a probability mass near 1 so that it will require a lesser taxi-cab cost of transportation. It is noteworthy, that such relationships, even though apparent, are infeasible to appear in cost functions that inherit properties solely from the information theory.

\begin{figure}[ht]
    \centering
    \includegraphics[width=0.3\textwidth]{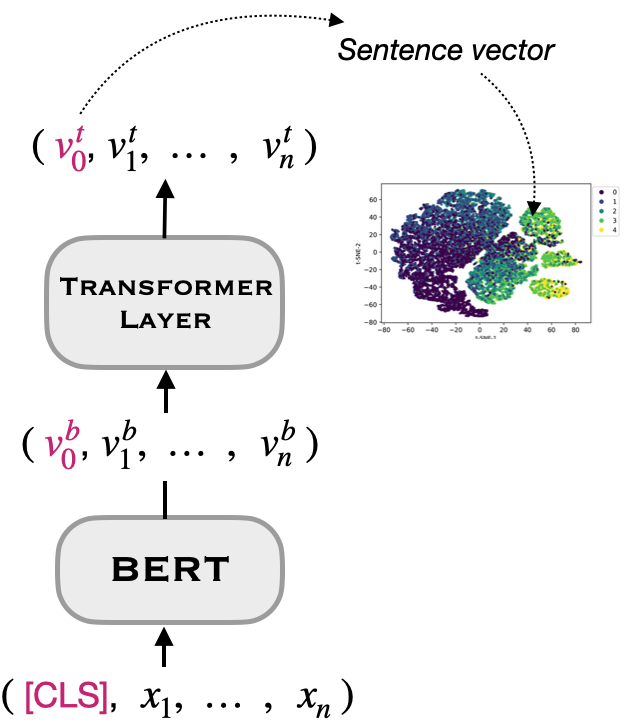}
    \caption{After the model parameter fitting is complete, we map the 128-dimensional [CLS] vector at the output of the Transformer layer to 2-dimensions using t-SNE.}
    \label{fig:arch_tsne} 
\end{figure}

\paragraph{t-SNE of sentence embeddings}
Next, we explain how we analyze the sentence embeddings in $\mathcal{M}_g$ obtained from \textit{Sink-E*}. A sentence refers to an Amazon food/product review. BERT's input sentence is lowercase WordPiece tokenized. We prepend the list of tokens with [CLS] token to represent the sentence which is later used for the classification task. First, each token is mapped to a static context-independent embedding. Then the vector list is passed through a sequence of multi-head self-attention operations that contextualizes each token. It is important to note that contextualization can be task-specific. We randomly sample 5000 review-label pairs for each sentiment class. For each textual review, as shown in \cref{fig:arch_tsne}, we use a 128-dimensional vector at the output of the transformer layer corresponding to the [CLS] token. This corresponds to the list of reviews represented in 128-dimensional vector space. To visualize the learned sentence representations, we map the vectors from 128-dimensional space to 2-dimensions using t-distributed Stochastic Neighbor Embedding (t-SNE)\footnote{We used the implementation from \href{https://scikit-learn.org/stable/modules/generated/sklearn.manifold.TSNE.html}{scikit-learn}.}.

\subsection{Semantic support to One-hot support} \label{DB1}
\begin{figure*}[ht]
    \centering
    \includegraphics[trim=0 550 10 10,clip,width=\textwidth]{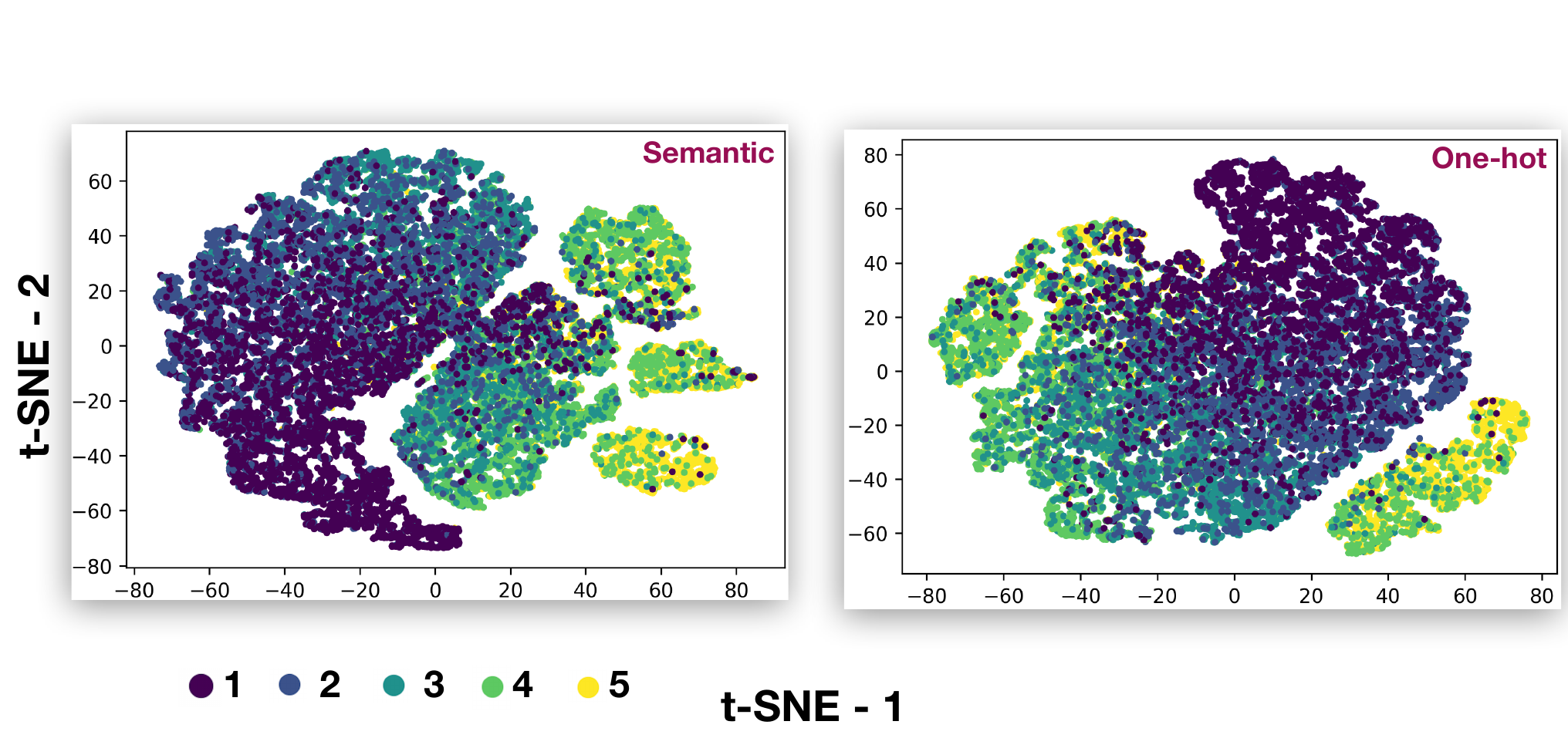}
    \caption{The one-hot encoding doesn't have clear distinct boundaries. The semantic structure is lost.}
    \label{fig:tsne2} 
\end{figure*}

One way to understand the importance of properly defining label space relationship is by defining metric space where each label acquires its own axis, thus losing the semantic information. For a five-class classification problem, we will have five axes and thus the support is a set of five distinct one-hot vectors each of size five. This way any misclassification, i.e., predicting a mass different from the ground truth label, will result in the same cost irrespective of whether positive sentiment is classified as strongly positive or strongly negative. This is due to the taxi-cab distance. Its value is computed by just summing up the absolute individual coordinate distances, which are just the predicted probabilities except for the coordinate corresponding to the ground truth.

We train the global model $\mathcal{M}_g$ with the Sinkhorn distance-based loss (\cref{eq:loss_final}) where the cost is defined as taxi-cab distance on the one-dimensional support and five-dimensional one-hot support as elaborated previously. The \cref{fig:tsne2} depicts the respective t-SNE scatter plots. In the plot with one-dimensional semantic support, we observe sentence vectors, i.e., features used for the classification task, are mapped in clearer clusters as compared to the plot at right without semantic information. We observe the points related to label 5 (strongly positive) are much more localized as compared to the sentence mappings with one-hot, which is distributed around the space. This clearly dictates the benefit of a meaningful metric as compared to a space that is not informative. Next, we check a similar case that occurs in entropy-based loss functions.

\subsection{Sinkhorn distance $\to$ KL divergence} \label{DB2}
Similar to the cost associated with the one-hot support in Sinkhorn, the KL divergence has no feasible way to capture the intrinsic metric in the label space. The plots in \cref{fig:tsne8} show the different sentence embeddings (t-SNE) with the varying entropy-based regularisation term in the vanilla Sinkhorn distance. As $\varepsilon \to 0^+$, we should get a pure OT-based loss function (\cref{eq:plain}). However, to speed up the Sinkhorn and gradient computations, we chose $\varepsilon = 0.001$ with no (F1-score) performance trade-off. As shown in the \cref{fig:tsne8}, with $\varepsilon >= 1$, the sentence representations are distributed across space with patches of label-dominant clusters. However, we can not see clear decision boundaries between the labels. As we decrease the $\varepsilon$ value below 1, we observe clearer feature maps for each label. For $\varepsilon$ = 0.001, we can see clear sentence vector clusters corresponding to label 5. We can see the higher confusion rate is only between labels 5 and 4 which can be attributed to the less cost of transportation of the mass from label 5 to 4 as compared to 5 to other labels. A similar trend can be seen for lower $\varepsilon$ values that are 0.01 and the $\varepsilon$ used in this work 0.003 where clearer and localized clusters can be seen.

\begin{figure*}[]
    \centering
\includegraphics[trim=0 100 10 10,clip,width=\textwidth]{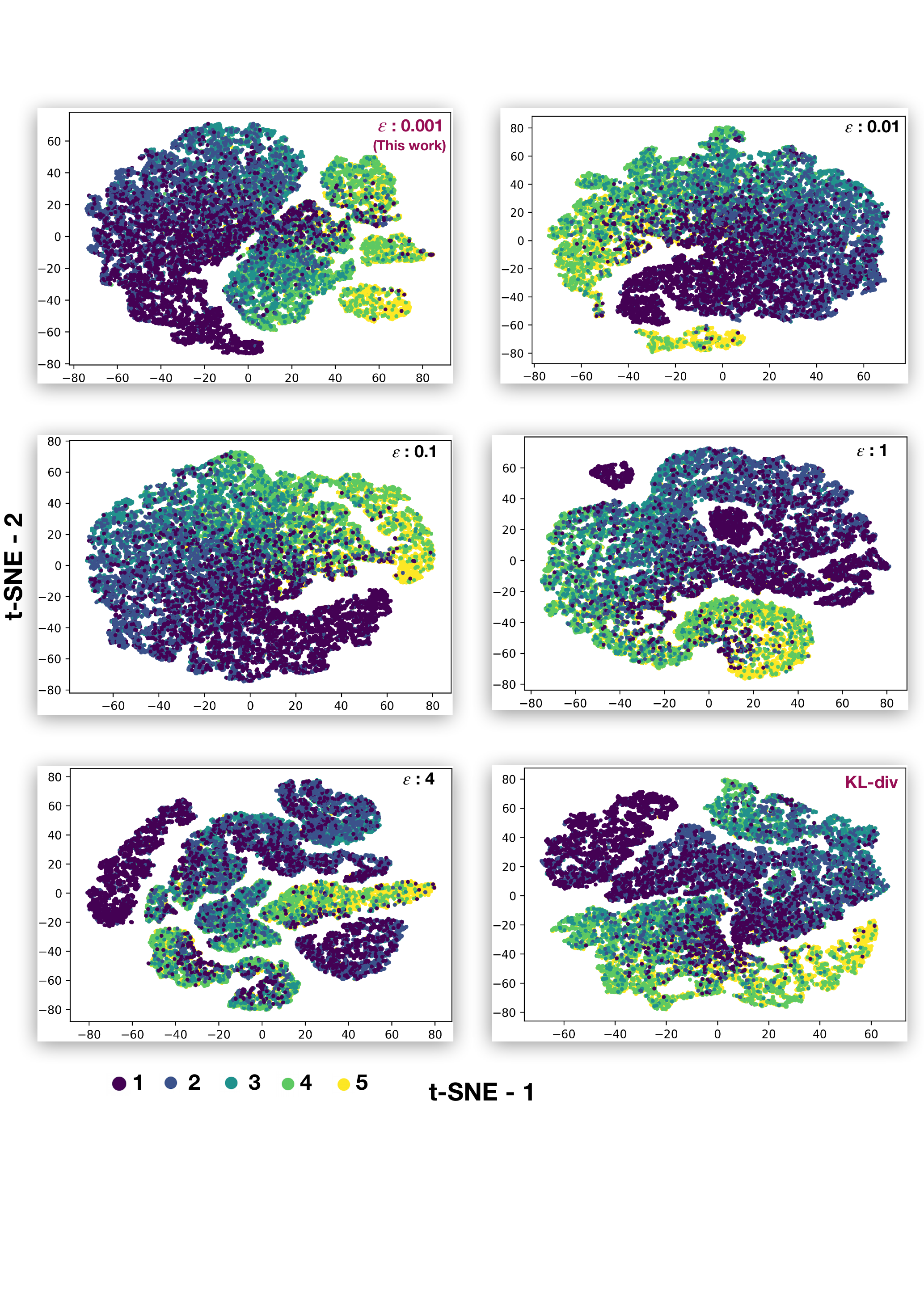}
    \caption{With a small entropic regularizer $\varepsilon$, it is visually striking that Sinkhorn seems to learn the latent structure boundaries better than KL divergence. For high $\varepsilon$, we see more clusters with mixed boundaries and not-so-clear demarcations.}
    \label{fig:tsne8}
\end{figure*}

\section{Model bias}\label{app_bias}
\subsection{Probability Skew}
To generate a random input, we uniformly sample 200 tokens from the vocab \footnote{We obtain the English vocabulary of size 30,522 from:\href{vocab}{\url{https://huggingface.co/google/bert_uncased_L-2_H-128_A-2/tree/main}}.
} with replacement and join them with white space. We obtain 100,000 such random texts. For a given text classifier model, the skew value for sentiment label 1 can be estimated by the fraction of times it is the prediction of when the model infers over the set of random texts (\cref{bias}).

\subsection{Gender and Racial biases}
Even though we considered the model probability skew as a reflection of bias induced from non-IID sampling, other biases such as gender and race can still be learned or acquired in the distillation process, For instance, take the following sentences:

\begin{quote}
    My \textbf{father} said that the food is just fine.  (review-1) $\to$ \textit{strong positive}\\
    My \textbf{mother} said that the food is just fine.  (review-2) $\to$ \textit{neutral}\\
\end{quote}

Review-1 and 2 differ in gender-specific words which are \textbf{father} and \textbf{mother}. Since it is a sentiment classification task, ideally, the intelligent system should not learn gender-specific cues from the text to generate its predictions. However, we observe a gender dependence in both the KL divergence and confident Sinkhorn-based predictions.

Similarly, we curate an example where the reviews differ only in a race-specific word.

\begin{quote}
    \textbf{White} guy said the phone is just fine.  (review-1) $\to$ \textit{neutral}\\
    \textbf{Latino} guy said the phone is just fine.  (review-2) $\to$ \textit{strong positive}\\
\end{quote}

The sentiment predictions made by the intelligent systems were different contrary to ideal behavior. The review with the word \textbf{White} shows a neutral sentiment while the review with the word \textbf{Latino} shows a strong positive sentiment. Hence, the systems took account of race-specific words while predicting the sentiment of a text. The behavior is observed both in the KL and Sinkhorn-based models with confident weights for sentiment classification.

\section{Gradients through loss}\label{grad_comp}
We demonstrate the computation of gradient of loss function (\cref{eq:loss_final}) with respect to the global model trainable parameters $\theta$. We can write the Lagrange dual of \cref{eq:plain} as
\begin{align} \label{eq:dual}
\begin{split}
    {\OT}_\varepsilon \defn \underset{(f,g) \in \mathcal{C}}{\max} \langle \mu, f \rangle & + \langle \nu, g \rangle - \varepsilon \langle \mu \otimes \nu, \exp \big(\frac{1}{\epsilon} (f \oplus g - \Cmat)\big) - 1 \rangle\\
    \mathcal{C} &= \{(f,g) \in \mathbb{R}^{n_s \times n_t}: f_i + g_j \leq {\Cmat}_{(i,j)}\}
\end{split}
\end{align}
where $f \oplus g$ is tensor sum $(y_s, y_t) \in \mathcal{Y}_s \times \mathcal{Y}_t \mapsto f(y_s) + g(y_t)$. The optimal dual (solution of \cref{eq:dual})  can retrieve us the optimal transport plan (solution of \cref{eq:loss_final}) with the relation $\pi = \exp(\frac{1}{\varepsilon})(f \oplus g - \Cmat)\cdot (\mu \otimes \nu)$. Recently, a few interesting properties of ${\OT}_\epsilon$ were explored \cite{peyre2019computational, feydy2019interpolating, luise2018differential} showing that optimal potentials $f$ and $g$ exist and are unique, and $\Delta {\OT}_\varepsilon (\mu, \nu) = (f,g)$.

\begin{algorithm}
\small
  \caption{Gradients of $\mathcal{L}(h_{\theta}(x), h_\mathcal{K}(x))$ with respect to $h_\theta(x)$
    \label{alg:packed-dna-hamming}}
\begin{algorithmic}[1]
\Require{Dual potentials \textit{\textbf{f \textsuperscript{1}, $\cdots$, f \textsuperscript{K}} $\in \mathbb{R}^{n_s}$} and \textit{\textbf{g \textsuperscript{1}, $\cdots$, g \textsuperscript{K}}} $\in \mathbb{R}^{n_t}$
}
\Statex
\For{$k \gets 1 \textrm{ to } K$}
\Let{\textit{\textbf{f\textsuperscript{k}}}}{\textbf{0}} \Comment{\textit{\textbf{f \textsuperscript{k}}}=$\{f^k_1, \ldots, f^k_{n_s}\}$}
    \Let{\textit{\textbf{g \textsuperscript{k}}}}{\textbf{0}} \Comment{\textit{\textbf{g\textsuperscript{k}}} = $\{g^k_1, \ldots, g^k_{n_t}\}$}
    \While{( \textit{\textbf{f \textsuperscript{k}, g \textsuperscript{k}}} not converged )} \tikzmark{top}
     \Let{$f_i^k$}{$\varepsilon \LSE_{m=1}^{n_s}(log(h^m_k(x)) + \frac{1}{\varepsilon} g_m - \frac{1}{\varepsilon} C(\mathcal{Y}_s^i, \mathcal{Y}_t^m))$}
     \Let{$g_j^k$}{$\varepsilon \LSE_{m=1}^{n_t}(log(h^{m}_\theta(x)) + \frac{1}{\varepsilon} f_m - \frac{1}{\varepsilon} C(\mathcal{Y}_s^m, \mathcal{Y}_t^j))$} \tikzmark{right} \tikzmark{bottom}\\
     ($\LSE$ is  log-sum-exp reduction, i.e, $\LSE_{m=1}^{M}(V_m) = \log \sum_{m=1}^{M} exp(V_m)$)
     \EndWhile
     \EndFor
     \\
$\frac{\partial({\OT}_\varepsilon(h_\theta(x), h_k(x)))}{\partial(h^{i}_\theta(x))}=f_i^k \;\;\;\;\;\forall i\in [n_s],\; k \in \mathcal{K}$ \Comment{as dual potentials are gradients of ${\OT}_\varepsilon$}.\\ 
$\frac{\partial({\mathcal{L}}_\varepsilon(h_\theta(x), h_\mathcal{K}(x)))}{\partial(h^{i}_\theta(x))}=\sum_{k \in \mathcal{K}} W_{B_k}(h_k(x)) f_i^k/ \sum_{k \in \mathcal{K}} W_{B_k}(h_k(x))$.

\end{algorithmic} \label{algorithm}
  \AddNote{top}{bottom}{right}{Sinkhorn loop.}
\end{algorithm}

Using these properties, we calculate gradients of the confident Sinkhorn cost in $\cref{eq:loss_final}$. \Cref{algorithm} obtains the gradients of the loss function with respect to $h_\theta(x)$ which can be backpropagated to tune model parameters. A crucial computation is to solve the coupling equation in steps 5 and 6. This is done via Sinkhorn iterations which have a linear convergence rate \cite{peyre2019computational}.

\section{Statistical Risk Bounds}\label{risk_bound}
Without the loss of generality, we will prove the risk bounds for two local models in learning without forgetting the paradigm. For a sample $x$, let the output of local models be $y_1=h_1(x)$ and $y_2=h_2(x)$ and the global model with trainable parameters be $y_{\theta}=h_\theta(x)$. To prove \Cref{th:risk}, we consider the set of IID training samples $S=\{(x^{(1)}, y_1^{(1)}, y_2^{(1)}), \ldots, (x^{(N)}, y_1^{(N)}, y_2^{(N)})\}$.

\begin{lemma} (from \cite{frogner2015learning})\label{a1} Let $h_{\hat{\theta}}, h_{\theta^*} \in \mathcal{H}_g$ be the minimizer of empirical risk $\hat{R}_S$ and expected risk $R$, respectively. Then

\begin{equation}
    R(h_{\hat{\theta}}) \leq R(h_{\theta^*}) + 2 \underset{h_\theta \in \mathcal{H}_g}{\sup} |R(h_\theta) - \hat{R}_S(h_\theta)| 
\end{equation}
\end{lemma}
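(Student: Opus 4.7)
The plan is to invoke the standard excess-risk decomposition from empirical risk minimization. The argument is purely deterministic: it uses no property of the specific loss $\mathcal{L}_\varepsilon$, no independence structure of $S$, and nothing about $\mathcal{H}_g$ beyond the fact that both $h_{\hat{\theta}}$ and $h_{\theta^*}$ lie in it. So I would treat $R$ and $\hat{R}_S$ as abstract real-valued functionals on $\mathcal{H}_g$ throughout.

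The key algebraic step is to add and subtract the two empirical risks $\hat{R}_S(h_{\hat{\theta}})$ and $\hat{R}_S(h_{\theta^*})$ to telescope the gap:
$$R(h_{\hat{\theta}}) - R(h_{\theta^*}) = \bigl(R(h_{\hat{\theta}}) - \hat{R}_S(h_{\hat{\theta}})\bigr) + \bigl(\hat{R}_S(h_{\hat{\theta}}) - \hat{R}_S(h_{\theta^*})\bigr) + \bigl(\hat{R}_S(h_{\theta^*}) - R(h_{\theta^*})\bigr).$$
I would then observe that the middle bracket is $\leq 0$ by the defining property of $h_{\hat{\theta}}$ as the empirical minimizer, since $h_{\theta^*} \in \mathcal{H}_g$ is a feasible competitor. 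For the first and third brackets I would pass to absolute values and upper bound each by $\sup_{h_\theta \in \mathcal{H}_g} |R(h_\theta) - \hat{R}_S(h_\theta)|$, using that $h_{\hat{\theta}}, h_{\theta^*} \in \mathcal{H}_g$. Summing the two identical bounds produces the factor $2$ and yields the stated inequality after rearrangement.

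There is no substantive obstacle here: the identity is two lines long, and the only care required is that the minimizers are actually attained (which the statement presupposes by the phrase ``be the minimizer''; otherwise one would pass to $\epsilon$-approximate minimizers and send $\epsilon \to 0^+$). The role of this lemma within the proof of \Cref{th:risk} is to reduce the generalization question to controlling the single uniform deviation $\sup_{h_\theta \in \mathcal{H}_g} |R(h_\theta) - \hat{R}_S(h_\theta)|$, at which point the Wasserstein-specific machinery—boundedness via the maximal transport cost $C_M$, the Rademacher complexity $\mathfrak{R}_N(\mathcal{H}_g^{o})$ of the logit class, and a McDiarmid-type concentration inequality—takes over to produce the three summands in the final bound of \Cref{th:risk}.
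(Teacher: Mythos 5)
Your proof is correct and is exactly the standard excess-risk telescoping argument that Frogner et al.\ use to establish this lemma (the paper here simply cites it without reproducing the proof). The add-and-subtract decomposition, the observation that $\hat{R}_S(h_{\hat{\theta}}) - \hat{R}_S(h_{\theta^*}) \leq 0$ by optimality of the empirical minimizer, and the double invocation of the uniform deviation to absorb the outer two brackets are precisely the intended route, and your remarks about the lemma's downstream role in \Cref{th:risk} are accurate.
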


To bound the risk for $h_{\hat{\theta}}$, we need to prove uniform concentration bounds for the distillation loss. We denote the space of loss functions induced by hypothesis space $\mathcal{H}_{g}$ as

\begin{equation}
    L = \Bigg\{\ell_\theta:(x,y_1, y_2) \mapsto \frac{w_1(y_1) D(y_\theta,y_1) + w_2(y_2)D(y_\theta,y_2)}{w_1(y_1) + w_2(y_2)} \Bigg\}
\end{equation}

\begin{lemma} \label{lem5} (\cite{frogner2015learning})
Let the transport cost matrix be $C$ and the constant $C_M = \underset{(i,j)}{\max}\;C_{(i,j)}$, then $0 \leq D(\cdot,\cdot) \leq C_M$, where $D(\cdot,\cdot)$ is \text{1}-Wasserstein distance.
\end{lemma}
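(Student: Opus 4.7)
The plan is to prove both inequalities directly from the linear-programming definition of the \text{1}-Wasserstein distance, exploiting only the non-negativity of the cost matrix entries and the fact that every admissible transport plan is a probability distribution on the product label space. Recall from \Cref{eq:plain} (with $\varepsilon=0$, which yields unregularized 1-Wasserstein) that
\begin{equation*}
D(\mu,\nu) \;=\; \min_{\pi \in \Pi(\mu,\nu)} \langle \pi, C \rangle \;=\; \min_{\pi \in \Pi(\mu,\nu)} \sum_{i,j} \pi_{(i,j)}\, C_{(i,j)},
\end{equation*}
where $\Pi(\mu,\nu)$ consists of non-negative matrices whose row sums equal $\mu$ and column sums equal $\nu$.

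For the lower bound, I would note that $C_{(i,j)} \ge 0$ for all $(i,j)$ (the label-space cost is a distance, hence non-negative), and $\pi_{(i,j)} \ge 0$ by definition of $\Pi(\mu,\nu)$. Therefore every term in the sum defining $\langle \pi, C \rangle$ is non-negative, so $\langle \pi, C\rangle \ge 0$ for each feasible $\pi$, and consequently $D(\mu,\nu)\ge 0$.

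For the upper bound, I would first observe that $\Pi(\mu,\nu)$ is non-empty because the independent coupling $\pi^{\otimes}_{(i,j)} := \mu_i \nu_j$ has marginals $\mu$ and $\nu$ and is feasible. Hence the minimum is attained at some $\pi^{\star}\in\Pi(\mu,\nu)$, and in particular
\begin{equation*}
D(\mu,\nu) \;\le\; \langle \pi^{\otimes}, C\rangle \;=\; \sum_{i,j} \mu_i \nu_j\, C_{(i,j)} \;\le\; C_M \sum_{i,j} \mu_i \nu_j \;=\; C_M \Bigl(\sum_i \mu_i\Bigr)\Bigl(\sum_j \nu_j\Bigr) \;=\; C_M,
\end{equation*}
using $C_{(i,j)}\le C_M$ and the fact that $\mu,\nu$ are probability measures summing to one. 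Combining the two displays gives $0\le D(\mu,\nu)\le C_M$.

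There is no real obstacle here: the argument is essentially a one-line sandwich using non-negativity on the left and total unit mass plus the uniform bound $C_{(i,j)}\le C_M$ on the right. The only subtlety worth flagging is the implicit assumption that the cost matrix is non-negative, which is automatic in our setting because the cost is derived from a metric $d_{\mathcal{Y}}$ (equivalently $|p-q|$ on the sentiment axis and $\|p-q\|_2$ on the NLI coordinates, as specified in \Cref{sec:method}); and the non-emptiness of $\Pi(\mu,\nu)$, which is witnessed by the product coupling $\mu\otimes\nu$ and ensures the minimum is well-defined.
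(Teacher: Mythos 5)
Your proof is correct and takes the standard sandwich approach: non-negativity of the integrand gives the lower bound, and evaluating the objective at the product coupling (or in fact at any feasible plan, since every $\pi\in\Pi(\mu,\nu)$ has total mass one) gives the upper bound $C_M$. The paper itself does not reproduce a proof for this lemma---it simply cites Frogner et al.\ (2015)---and the argument given there is essentially the same one you wrote, so there is nothing to reconcile.
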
 

\begin{definition}
(The Rademacher Complexity \cite{bartlett2002rademacher}). Let $G$ be a family of mapping from $\mathcal{Z}$ to $\mathbb{R}$, and $S=(z_1,\ldots,z_N)$ a fixed sample from $\mathcal{Z}$. The empirical Rademacher complexity of $G$ with respect to $S$ is defined as:

\begin{equation}
    \hat{\mathfrak{R}}_S(G)= {\rm I\!E}_{\sigma}\Bigg [\underset{g \in G}{\sup}\;\frac{1}{N} \sum_{i=1}^{n} {\sigma}_i g(z_i) \Bigg]
\end{equation}

where $\sigma = (\sigma_1,\ldots,\sigma_N)$, with $\sigma_i$'s independent uniform random variables taking values in $\{+1,-1\}$. $\sigma_i$'s are called the Rademacher random variables. The Rademacher complexity is defined by taking expectation with respect to the samples $S$.

\begin{equation}
    \mathfrak{R}_N(G) = {\rm I\!E}_S\big[ \hat{\mathfrak{R}}_S(G)\big]
\end{equation}
\end{definition}

\begin{theorem} \label{a3}
 For any $\delta > 0$, with probability at least 1-$\delta$, the following holds for all $l_{\theta} \in L$:
 \begin{equation}
   {\rm I\!E}[\ell_\theta] - \hat{{\rm I\!E}}[\ell_\theta] \leq 2\mathfrak{R}_N(L) + \sqrt{\frac{C_M^2log(1/\delta)}{2N}}.
 \end{equation}
\end{theorem}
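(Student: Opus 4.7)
The argument is a textbook Rademacher-complexity generalization bound, which I would execute in the standard two-step fashion: first concentration of the supremum deviation via McDiarmid, then symmetrization to replace the expected supremum by a Rademacher average.

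First I would verify the boundedness of the loss class. Each element $\ell_\theta \in L$ is a convex combination of the two 1-Wasserstein terms $D(y_\theta,y_1)$ and $D(y_\theta,y_2)$, with nonnegative weights summing to $w_1(y_1)+w_2(y_2)$ in the denominator. Hence \Cref{lem5} forces $0 \le \ell_\theta(x,y_1,y_2) \le C_M$ pointwise, and in particular this bound is uniform in $\theta$ and in the sample. This is the only use of \Cref{lem5}, and it is the input that fixes the constants in McDiarmid.

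Next I would apply McDiarmid's bounded differences inequality to
\[
\Phi(S) \;=\; \sup_{\ell_\theta \in L}\bigl(\,\mathbb{E}[\ell_\theta] - \hat{\mathbb{E}}_S[\ell_\theta]\,\bigr).
\]
Because replacing a single triple $(x^{(i)},y_1^{(i)},y_2^{(i)})$ in $S$ changes $\hat{\mathbb{E}}_S[\ell_\theta]$ by at most $C_M/N$, the function $\Phi$ satisfies the bounded differences condition with constants $c_i = C_M/N$, so $\sum_i c_i^2 = C_M^2/N$. McDiarmid then yields
\[
\Pr\bigl(\Phi(S) - \mathbb{E}[\Phi(S)] \ge t\bigr) \;\le\; \exp\!\bigl(-2Nt^2/C_M^2\bigr),
\]
and setting the right-hand side equal to $\delta$ gives the deviation term $C_M\sqrt{\log(1/\delta)/(2N)}$.

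Finally I would bound $\mathbb{E}[\Phi(S)]$ by $2\mathfrak{R}_N(L)$ via the usual ghost-sample symmetrization: introducing an independent copy $S'=\{(x'^{(i)},y_1'^{(i)},y_2'^{(i)})\}_{i=1}^N$, rewriting $\mathbb{E}[\ell_\theta] = \mathbb{E}_{S'}[\hat{\mathbb{E}}_{S'}[\ell_\theta]]$, pulling the supremum inside the outer expectation by Jensen, and introducing Rademacher variables $\sigma_i$ to symmetrize the difference $\hat{\mathbb{E}}_{S'}[\ell_\theta] - \hat{\mathbb{E}}_S[\ell_\theta]$. Splitting the resulting term in two and applying the triangle inequality for the supremum gives exactly $2\mathfrak{R}_N(L)$. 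Chaining this with the McDiarmid bound yields the claim.

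There is no real obstacle here beyond bookkeeping; the only step that requires even a moment of care is checking that the denominator $w_1(y_1)+w_2(y_2)$ in the definition of $\ell_\theta$ does not spoil the pointwise $[0,C_M]$ bound (it does not, because after division the loss is a convex combination of quantities in $[0,C_M]$). The rest is the standard McDiarmid + symmetrization template, specialized to the weighted 1-Wasserstein loss class defined by $\mathcal{H}_g$.
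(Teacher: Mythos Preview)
Your proposal is correct and follows essentially the same route as the paper: define $\Phi(S)=\sup_{\ell\in L}(\mathbb{E}[\ell]-\hat{\mathbb{E}}_S[\ell])$, use the pointwise bound $0\le \ell_\theta\le C_M$ from \Cref{lem5} to get bounded differences $c_i=C_M/N$, apply McDiarmid, and then invoke the standard symmetrization $\mathbb{E}[\Phi(S)]\le 2\mathfrak{R}_N(L)$. Your observation that the normalized weights make $\ell_\theta$ a genuine convex combination (hence bounded by $C_M$, not $2C_M$) is exactly the right way to get the constant appearing in the theorem statement.
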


\begin{proof}
By definition ${\rm I\!E}[\ell_\theta] = R(h_\theta)$ and $\hat{{\rm I\!E}}[\ell_\theta] = \hat{R}(h_\theta)$. Let,

\begin{equation*}
    \Phi(S)=\underset{\ell \in L}{\sup} \; {\rm I\!E}[\ell] - \hat{{\rm I\!E}}_S[\ell].
\end{equation*}

Let $S$ and $S'$ differ only in sample $(\Bar{x}^{(i)}, \Bar{y}^{(i)}_1, \Bar{y}^{(i)}_2)$, by Lemma \ref{lem5}, it holds that:

\begin{multline}
    \Phi (S) - \Phi (S') \leq \underset{\ell \in L}{\sup}\; \hat{{\rm I\!E}}_{S'}-\hat{{\rm I\!E}}_{S} = \underset{h_\theta \in \mathcal{H}}{\sup} \frac{1}{N} \Bigg \{w_1(\Bar{y}^{(i)}_1)D(\Bar{y}_\theta^{(i)},\Bar{y}^{(i)}_1) + w_2(\Bar{y}^{(i)}_2)D(\Bar{y}_\theta^{(i)},\Bar{y}^{(i)}_2) \\
    - w_1(y^{(i)}_1)D(y_\theta,y_1^{(i)}) - w_2(y_2^{(i)})D(y_\theta,y_2^{(i)}) \Bigg \} \leq \frac{2C_M}{N}
\end{multline}
This inequality can be achieved by putting $D(\Bar{y}_\theta^{(i)},\Bar{y}^{(i)}_1)=D(\Bar{y}_\theta^{(i)},\Bar{y}^{(i)}_2)=C_M$ and $D(y_\theta^{(i)},y_1^{(i)})=D(y_\theta^{(i)},y_2^{(i)})=0$.

Similarly, $\Phi (S') - \Phi (S) \leq C_M/N$, thus $|\Phi (S') - \Phi (S)| \leq C_M/N$. Now, from the McDiarmid’s inequality \cite{mcdiarmid1998concentration} and its usage in \cite{frogner2015learning}, we can establish

\begin{equation}
    \Phi (S) \leq {\rm I\!E}[\Phi(S)] + \sqrt{\frac{KC_M^2log(1/\delta)}{2N}}.
\end{equation}

From the bound established in the proof of Theorem B.3 in \cite{frogner2015learning}, i.e., $ {\rm I\!E}_S[\Phi(S)] \leq 2\mathfrak{R}_N(L)$, we can conclude the proof.
\end{proof}

To complete the proof of Theorem \Cref{th:risk}, we have to treat $\mathfrak{R}_N(L)$ in terms of $\mathfrak{R}_N(\mathcal{H}_g)$.

Now, let $\iota: \mathbb{R}^{|\mathcal{Y}|} \times \mathbb{R}^{|\mathcal{Y}|} \mapsto \mathbb{R}$ defined by $\iota(y, y') = D(\mathcal{s}(y),\mathcal{s}(y)')$, where $\mathcal{s}$ is a softmax function defined over the vector of logits. From Proposition B.10 of \cite{frogner2015learning}, we know:

\begin{equation}
    |\iota(y, y') - \iota(\bar{y}, \bar{y}')| \leq 2C_M(||y-\bar{y}||_2 + ||y'-\bar{y}'||_2)
\end{equation}

Let $\iota_s: \mathbb{R}^{|\mathcal{Y}|} \times \mathbb{R}^{|\mathcal{Y}|} \times \mathbb{R}^{|\mathcal{Y}|} \mapsto \mathbb{R}$ defined by:

\begin{align}
    \iota_s(y,y_1,y_2) &= \frac{w_1(\mathfrak{s}(y_1)) D(\mathfrak{s}(y),\mathfrak{s}(y_1)) + w_2(\mathfrak{s}(y_2))D(\mathfrak{s}(y),\mathfrak{s}(y_2))}{w_1(\mathfrak{s}(y_1)) + w_2(\mathfrak{s}(y_2))}\\
    &= \bar{w}_1(\mathfrak{s}(y_1), \mathfrak{s}(y_2)) \;D(\mathfrak{s}(y),\mathfrak{s}(y_1)) + \bar{w}_2(\mathfrak{s}(y_1), \mathfrak{s}(y_2))\;D(\mathfrak{s}(y),\mathfrak{s}(y_2))
\end{align}

where $w_1(.), w_2(.)$ are confidence score of local model predictions $y_1, y_2$ on an input $x$. $\bar{w}_1(.), \bar{w}_2(.)$ are normalized scores. Note that the local model predictions, i.e., $y_1$ and $y_2$ are functions of $x$, where $x$ is sampled from the data domain distribution $f(x)$. Hence, we can view the loss function as
\begin{align}
    \iota_s(y,y_1,y_2) &= D(\mathfrak{s}(y),\mathfrak{s}(y_1)) + D(\mathfrak{s}(y),\mathfrak{s}(y_2))\\
    &= \iota(y,y_1) + \iota(y,y_2).
\end{align}

where $y$ is a function of $x_{new}$ sampled from a weighted distribution $\bar{w}_1(\mathfrak{s}(y_1), \mathfrak{s}(y_2)) f(x)$.

The \textit{Lipschitz} constant of $\iota_s(y,y_1,y_2)$ can thus be identified by:

\begin{align}
    |\iota_s(y,y_1,y_2)-\iota_s(\bar{y},\bar{y}_1,\bar{y}_2)| &= |\iota(y,y_1) + \iota(y,y_2) - \iota(\bar{y},\bar{y}_1) - \iota(\bar{y},\bar{y}_2)|\\
    &\leq |\iota(y,y_1) - \iota(\bar{y},\bar{y}_1)| + |\iota(y,y_2) + \iota(\bar{y},\bar{y}_2)|\\
    &\leq 2C_M(||y-\bar{y}||_2 + ||y_1-\bar{y}_1||_2 + ||y-\bar{y}||_2 + ||y_2-\bar{y}_2||_2)\\
    &\leq 4C_M(||y-\bar{y}||_2 + ||y_1-\bar{y}_1||_2 + ||y_2-\bar{y}_2||_2)\\
    &\leq 4C_M||(y,y_1,y_2)-(\bar{y},\bar{y}_1,\bar{y}_2)||_2
\end{align}

Thus, the \textit{Lipschitz} constant of plain Sinkhorn based distillation is $4C_M$.

\paragraph{Proof of Theorem \ref{th:risk}}
We define the space of loss function for $k$ local models:
\begin{equation*}
    L = \Bigg\{\iota_\theta:(x,\{y_k\}_{k \in \mathcal{K}}) \mapsto \sum_{k \in \mathcal{K}} w_k({\mathfrak{s}(h_\theta^o(x))})D(\mathfrak{s}(y_k))\Bigg\}
\end{equation*}

Following the notations in \cite{frogner2015learning}, we apply the following generalized Talagrand’s lemma \cite{ledoux2013probability}:
\begin{lemma}
Let $\mathcal{F}$ be a class of real functions, and $\mathcal{H} \subset \mathcal{F} = \mathcal{F}_1 \times \ldots \times \mathcal{F}_K$ be a $K$-valued function class. If $m : \mathbb{R}^K \mapsto \mathbb{R}$ is a \textit{$L_{\mathfrak{m}}$-Lipschitz} function and $\mathfrak{m}(0) = 0$, then $\mathfrak{R}_S(\mathfrak{m} \circ \mathcal{H}) \leq 2L_{\mathfrak{m}} \sum_{k=1}^{K} \hat{\mathfrak{R}}_S(\mathcal{F}_k)$.
\end{lemma}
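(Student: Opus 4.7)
The strategy is to reduce this $K$-variable contraction inequality to $K$ applications of the classical one-dimensional Ledoux--Talagrand contraction principle by telescoping $\mathfrak{m}$ along its coordinates, anchored at the origin where $\mathfrak{m}$ vanishes by hypothesis. The key structural input is that Euclidean-norm Lipschitz continuity of $\mathfrak{m}$ restricts to coordinatewise Lipschitz continuity with the same constant $L_\mathfrak{m}$.

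\textbf{Steps.} First, using $\mathfrak{m}(0) = 0$, write
\begin{equation*}
\mathfrak{m}(y_1,\ldots,y_K) \;=\; \sum_{k=1}^{K} \Delta_k(y_1,\ldots,y_k),
\end{equation*}
where $\Delta_k(y_1,\ldots,y_k) = \mathfrak{m}(y_1,\ldots,y_k,0,\ldots,0) - \mathfrak{m}(y_1,\ldots,y_{k-1},0,\ldots,0)$. For every fixed $(y_1,\ldots,y_{k-1})$, the map $t \mapsto \Delta_k(y_1,\ldots,y_{k-1},t)$ is $L_\mathfrak{m}$-Lipschitz in $t$ with value zero at $t=0$. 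Substituting this identity into the definition of $\hat{\mathfrak{R}}_S(\mathfrak{m}\circ\mathcal{H})$ and applying subadditivity of the supremum together with linearity of expectation yields
\begin{equation*}
\hat{\mathfrak{R}}_S(\mathfrak{m}\circ\mathcal{H}) \;\leq\; \sum_{k=1}^{K} \mathbb{E}_\sigma \sup_{h\in\mathcal{H}} \frac{1}{N}\sum_{i=1}^{N} \sigma_i\, \Delta_k\bigl(h_1(z_i),\ldots,h_k(z_i)\bigr).
\end{equation*}
For each $k$, I view the inner supremum as an outer supremum over the ``auxiliary'' coordinates $(h_j)_{j\neq k}$ and an inner supremum over $h_k \in \mathcal{F}_k$. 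For each fixed choice of auxiliaries, the sample-indexed maps $\phi_{k,i}(t) = \Delta_k(h_1(z_i),\ldots,h_{k-1}(z_i),t)$ are $L_\mathfrak{m}$-Lipschitz with $\phi_{k,i}(0)=0$, and the scalar Ledoux--Talagrand contraction principle then bounds the $k$-th summand by $2L_\mathfrak{m}\,\hat{\mathfrak{R}}_S(\mathcal{F}_k)$. Summing over $k$ delivers the claimed inequality.

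\textbf{Main obstacle.} The delicate point is the last reduction: the scalar contraction principle is stated in expectation over the Rademacher signs, while the outer supremum over the auxiliary coordinates sits outside that expectation and cannot in general be interchanged with it. The standard resolution is to invoke the sample-dependent form of the scalar contraction (which permits the $\phi_i$ to be arbitrary index-dependent $L$-Lipschitz maps vanishing at zero) and to absorb the auxiliary dependence into an enlarged effective class whose empirical Rademacher complexity is still $\hat{\mathfrak{R}}_S(\mathcal{F}_k)$, since the process $\frac{1}{N}\sum_i \sigma_i h_k(z_i)$ depends only on $h_k$. The hypothesis $\mathfrak{m}(0)=0$ is precisely what makes every telescoped increment anchor at zero, preventing an additive correction and keeping the contraction constant equal to $2L_\mathfrak{m}$ per coordinate.
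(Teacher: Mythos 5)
Your telescoping decomposition is a natural first move, and the observations attached to it are correct: since $\mathfrak{m}(0)=0$, the increments $\Delta_k$ sum to $\mathfrak{m}$, and each $\Delta_k$ is $L_\mathfrak{m}$-Lipschitz in its last coordinate with $\Delta_k(y_1,\ldots,y_{k-1},0)=0$. The gap is in the step that carries the whole proof: the claim that each summand $\E_\sigma \sup_{h\in\mathcal{H}} \tfrac1N\sum_i\sigma_i\,\Delta_k\bigl(h_1(z_i),\ldots,h_k(z_i)\bigr)$ is bounded by $2L_\mathfrak{m}\hat{\mathfrak{R}}_S(\mathcal{F}_k)$ via the scalar Ledoux--Talagrand contraction principle. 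That principle requires the contraction maps $\phi_i$ to be \emph{fixed}, independent of the variable being supremized. Here $\phi_{k,i}(t)=\Delta_k(h_1(z_i),\ldots,h_{k-1}(z_i),t)$ depends on the auxiliaries $(h_j)_{j<k}$, and those sit under the same $\sup_{h\in\mathcal{H}}$ inside $\E_\sigma$. What contraction gives, for each fixed choice of auxiliaries, is a bound on $\E_\sigma\sup_{h_k}(\cdots)$ by $2L_\mathfrak{m}\hat{\mathfrak{R}}_S(\mathcal{F}_k)$; taking $\sup$ over auxiliaries of that bound controls $\sup_{\mathrm{aux}}\E_\sigma\sup_{h_k}(\cdots)$, which is in general \emph{smaller} than the quantity you need, $\E_\sigma\sup_{\mathrm{aux},h_k}(\cdots)$. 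The proposed fix --- ``absorb the auxiliary dependence into an enlarged effective class whose empirical Rademacher complexity is still $\hat{\mathfrak{R}}_S(\mathcal{F}_k)$'' --- is a gesture rather than an argument: the class $\{z\mapsto\Delta_k(h_1(z),\ldots,h_{k-1}(z),h_k(z)):h\in\mathcal{H}\}$ is not a per-sample contraction of $\mathcal{F}_k$, and nothing forces its complexity to equal $\hat{\mathfrak{R}}_S(\mathcal{F}_k)$.

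For context, the paper does not prove this lemma; it cites it (attributing it to Ledoux--Talagrand, following Frogner et al.). The proofs in the literature take a fundamentally different route. The cleanest modern one is Maurer's: replace the Rademacher signs by Gaussians, compare the Gaussian process $g\mapsto\sum_i g_i\,\mathfrak{m}(h(z_i))$ against $g\mapsto L_\mathfrak{m}\sum_{i,k} g_{ik} h_k(z_i)$ via Slepian/Sudakov--Fernique, and convert back; this yields a constant $\sqrt{2}L_\mathfrak{m}$, and the factor $\sum_k\hat{\mathfrak{R}}_S(\mathcal{F}_k)$ then drops out by splitting $\sup_{h\in\mathcal{H}}\sum_{i,k}\sigma_{ik}h_k(z_i)$ across coordinates using $\mathcal{H}\subset\mathcal{F}_1\times\cdots\times\mathcal{F}_K$. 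The older Ledoux--Talagrand argument is a symmetrization-style induction on the sample index $i$ that treats the vector argument jointly at each step; it does not reduce to $K$ independent one-dimensional contractions, which is precisely what your telescoping tries and fails to do. To salvage a coordinatewise argument you would need a contraction principle tolerant of a supremized contraction map, and no such statement holds without additional structure.
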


Now, the Lemma can not be directly applied to the confident Sinkhorn loss as \textbf{0} is an invalid input. To get around the problem, we assume the global hypothesis space is of the form:

\begin{equation}
    \mathcal{H} = \{\mathfrak{s} \circ h : h^o \in \mathcal{H}^o\}
\end{equation}

Thus, we apply the lemma to the \textit{$4C_M$-Lipschitz} continuous function $\mathfrak{l}$ and the function space:

\begin{equation*}
    \underbrace{\mathcal{H}^o \times \ldots \times \mathcal{H}^o}_{|\mathcal{Y}| copies}\underbrace{\times \mathcal{I} \times \ldots \times \mathcal{I}}_{|\mathcal{Y}|\times|\mathcal{K}| copies}
\end{equation*}

with $\mathcal{I}$ a singleton function space of identity maps. It holds:
\begin{equation}\label{eq:rn}
    \mathfrak{R}_N(L) \leq 8C_M (|\mathcal{Y}|\hat{\mathfrak{R}}_N + |\mathcal{Y}|\times|\mathcal{K}|\hat{\mathfrak{R}}_N(\mathcal{I})) = 8|\mathcal{Y}|C_M\hat{\mathfrak{R}}_N(\mathcal{H}^o)
\end{equation}

As,
\begin{equation*}
    \hat{\mathfrak{R}}_N(\mathcal{I}) = {\rm I\!E}_{\sigma}\Big[\underset{g\in \mathcal{I}}{\sup}\; \frac{1}{N}\sum_{i=1}^{N}\sigma_i g(y_i) = 0\Big] =  {\rm I\!E}_{\sigma}\Big[\frac{1}{N}\sum_{i=1}^{N}\sigma_i y_i = 0\Big]
\end{equation*}

Thus, by combining \cref{eq:rn} in Theorem \ref{a3} and Lemma \ref{a1} proves the Theorem \ref{th:risk}.

Since the $\varepsilon$ is small in our experiments, we can quantify the difference between Sinkhorn distance and Wasserstein for a given \textit{Lipschitz} cost function.



\section{Societal Impact}\label{Soc_Impct}
Our work can be extended to different domains. Although in this paper we examined sentiment classifications, other areas, where labels are not available, i.e., zero-shot classification, would also be amenable to federated confident Sinkhorns. Within our approach, a potential downstream task could be to detect cyberbullying. An important area of application for distraught parents, school teachers, and teens. In this case, a sentiment that has a high probability of being classified as cyberbullying can be flagged to either moderators or guardians of a particular application.  

A weakness of this approach is that the training of such an application will be based on local models in other domains. Care would be needed in deciding which local models to use in the federation. This choice is highly dependent on the industry and the availability of data. 
Misuse of our approach could be that the federated training might distill some population-specific information to the global model which makes the central system vulnerable to attacks that might lead to a user-private data breach. As GPU implementation of OT metrics becomes commonplace, we envisage that our approach might help in other Natural Language Processing (NLP) tasks. Indeed, this would be potentially beneficial and open up new avenues for the NLP community. 

\section{Experimental reproducibility}\label{reprod}
All the experiments were performed on one Quadro RTX 8000 GPU with 48 GB memory. The model architectures were designed on Python (version 3.9.2) library PyTorch (version 1.8.1) under \textit{BSD-style license}. For Sinkhorn iterations and gradient calculations, we use GeomLoss library (version 0.2.4) (\url{https://github.com/jeanfeydy/geomloss}) under \textit{MIT licence}. For barycenter calculations, we use POT library (0.7.0) from (\url{https://pythonot.github.io/}) under \textit{MIT licence}.

We clip all the text to a maximum length of 200 tokens and pad the shorter sentences with <unk>. To speed up the experiments, we use pretrained BERT-Tiny from \url{https://github.com/google-research/bert}.

The batch size is chosen via grid search from the set $\{16,32,64,128,256,512,1024,2048\}$ and found 1024 to be optimal for performance and speed combination on the considered large datasets. We use Adam optimizer with learning rate chosen via grid search $\{10^{-4},10^{-3},10^{-2},10^{-1},1,10^{1},10^{2},10^{3},10^{4}\}$. All the experiments were run for 20 epochs. The regularization parameter $\varepsilon$ is chosen based on minimal loss obtained amongst the set of $\varepsilon$ values $\{10^{-4},10^{-3},10^{-2},10^{-1},1,2,4,8,16\}$.

For the Amazon review dataset, we were unable to find the license.

\end{document}